\newtheorem{theorem}{Theorem}
\newtheorem{corollary}{Corollary}
\newcommand*{\circled}[1]{\lower.7ex\hbox{\tikz\draw (0pt, 0pt)%
		circle (.5em) node {\makebox[1em][c]{\small #1}};}}
\def\BibTeX{{\rm B\kern-.05em{\sc i\kern-.025em b}\kern-.08em
    T\kern-.1667em\lower.7ex\hbox{E}\kern-.125emX}}
\begin{document}

\title{
	LineaRE: Simple but Powerful Knowledge Graph Embedding for Link Prediction
	\thanks{The corresponding author of the paper is Jing Zhang. The source code is available at: https://github.com/pengyanhui/LineaRE.}
}

\author{
	\IEEEauthorblockN{
		Yanhui Peng, Jing Zhang\textsuperscript{*}
	}
	\IEEEauthorblockA{
		\textit{School of Computer Science and Engineering} \\
		\textit{Nanjing University of Science and Technology} \\
		200 Xiaolingwei Street, Nanjing 210094, China \\
		\{118106010712, jzhang\}@njust.edu.cn
	}
}

\maketitle

\begin{abstract}
The task of link prediction for knowledge graphs is to predict missing relationships between entities. Knowledge graph embedding, which aims to represent entities and relations of a knowledge graph as low dimensional vectors in a continuous vector space, has achieved promising predictive performance. If an embedding model can cover different types of connectivity patterns and mapping properties of relations as many as possible, it will potentially bring more benefits for link prediction tasks. In this paper, we propose a novel embedding model, namely LineaRE, which is capable of modeling four connectivity patterns (i.e., symmetry, antisymmetry, inversion, and composition) and four mapping properties (i.e., one-to-one, one-to-many, many-to-one, and many-to-many) of relations. Specifically, we regard knowledge graph embedding as a simple linear regression task, where a relation is modeled as a linear function of two low-dimensional vector-presented entities with two weight vectors and a bias vector. Since the vectors are defined in a real number space and the scoring function of the model is linear, our model is simple and scalable to large knowledge graphs. Experimental results on multiple widely used real-world datasets show that the proposed LineaRE model significantly outperforms existing state-of-the-art models for link prediction tasks.
\end{abstract}

\begin{IEEEkeywords}
	knowledge graph; embedding; link prediction; linear regression.
\end{IEEEkeywords}

\begin{table*}[t]
	\caption{
		The score functions $f_r(\bm{h},\bm{t})$ of several knowledge graph embedding models.
	}
	\label{ScoringFunction}
	\centering
	\begin{tabular}{|c|c|c|}  
		\hline
		\textbf{Model}
		& \textbf{Scoring function $f_r(\bm{h}, \bm{h})$}
		& \textbf{\# Parameters} \\
		\hline
		TransE \cite{TransE}
		& $\left \| \bm{h} + \bm{r} - \bm{t} \right \|_{1/2}$
		& $\bm{h}, \bm{t}, \bm{r} \in \mathbb{R}^{k}$ \\
		\hline
		TransH \cite{TransH}
		& $\left \| (\bm{h} - \bm{w_{r}^{\top}hw_{r}}) + \bm{d_r}-(\bm{t} - \bm{w_{r}^{\top}tw_{r}}) \right \|_{2}^{2}$
		& $\bm{h}, \bm{t}, \bm{w_r}, \bm{d_r} \in \mathbb{R}^{k}$ \\
		\hline
		TransR \cite{TransR}
		& $\left \| \bm{M_rh} + \bm{r} - \bm{M_rt} \right \|_{2}^{2}$
		& $\bm{h}, \bm{t} \in \mathbb{R}^{k}, \bm{r} \in \mathbb{R}^{d}, \bm{M_r} \in \mathbb{R}^{k \times d}$ \\
		\hline
		TransD \cite{TransD}
		& $\left \| (\bm{r_p h_p^{\top}} + \bm{I^{k \times d}})\bm{h} + \bm{r} - (\bm{r_p t_p^{\top}} + \bm{I^{k \times d}})\bm{t} \right \|_{2}^{2}$
		& $\bm{h}, \bm{h_p}, \bm{t}, \bm{t_p} \in \mathbb{R}^{d}, \bm{r}, \bm{r_p} \in \mathbb{R}^{k}$ \\
		\hline
		DistMult \cite{DistMult}
		& $\bm{h^{\top}}diag(\bm{r})\bm{t}$
		& $\bm{h}, \bm{t}, \bm{r} \in \mathbb{R}^{k}$ \\
		\hline
		ComplEx \cite{ComplEx}
		& $Re(\bm{h^{\top}}diag(\bm{r})\bm{\bar{t}})$
		& $\bm{h}, \bm{t}, \bm{r} \in \mathbb{C}^{k}$ \\
		\hline
		ConvE \cite{ConvE}
		& $<\sigma(vec(\sigma([\bar{\bm{r}}, \bar{\bm{h}}] * \varOmega)) \bm{W}), \bm{t}>$
		& $\bm{h}, \bm{t}, \bm{r} \in \mathbb{R}^{k}$ \\
		\hline
		RotatE \cite{RotatE}
		& $\left \| \bm{h} \circ \bm{r} - \bm{t} \right \|_{1}$
		& $\bm{h}, \bm{t}, \bm{r} \in \mathbb{C}^{k}, |r_i|=1$ \\
		\hline
		LineaRE (Our model)
		& $\left \| \bm{w_{r}^{1}} \circ \bm{h} + \bm{b_r} - \bm{w_{r}^{2}} \circ \bm{t} \right \|_{1}$
		& $\bm{h}, \bm{t}, \bm{b_r}, \bm{w_{r}^{1}}, \bm{w_{r}^{2}}, \in \mathbb{R}^{k}$ \\
		\hline
		\multicolumn{3}{l}{$<\cdot>$ denotes the generalized dot product.} \\
		\multicolumn{3}{l}{$\circ$ denotes the Hadamard product.} \\
		\multicolumn{3}{l}{$\sigma$ denotes activation function and $*$ denotes 2D convolution.} \\
		\multicolumn{3}{l}{$\bar{\cdot}$ denotes conjugate for complex vectors, and 2D reshaping for real vectors in ConvE model.}
	\end{tabular}
\end{table*}
\begin{table*}[t]
	\caption{
		The modeling capabilities of models.
	}
	\label{ModelingAbility}
	\centering
	\begin{tabular}{|c|c|c|c|c|c|}
		\hline
		\textbf{Model}
		& \textbf{Symmetry}
		& \textbf{Antisymmetry}
		& \textbf{Inversion}
		& \textbf{Composition}
		& \textbf{Complex mapping properties} \\
		\hline
		TransE	& \textendash	& \ding{51}	& \ding{51}	& \ding{51}	& \textendash \\
		\hline
		TransX$^{\mathrm{a}}$	& \ding{51}	& \ding{51}	& \textendash	& \textendash	& \ding{51} \\
		\hline
		DistMult& \ding{51}	& \textendash	& \textendash	& \textendash	& \ding{51} \\
		\hline
		ComplEx	& \ding{51}	& \ding{51}	& \ding{51}	& \textendash	& \ding{51} \\
		\hline
		RotatE 	& \ding{51}	& \ding{51}	& \ding{51}	& \ding{51}	& \textendash \\
		\hline
		LineaRE (Our model)	& \ding{51}	& \ding{51}	& \ding{51}	& \ding{51}	& \ding{51} \\
		\hline
		\multicolumn{6}{l}{
			$^{\mathrm{a}}$ TransX represents a wide range of TransE’s \cite{TransE} variants, such as TransH \cite{TransH}, TransR \cite{TransR}, and TransD \cite{TransD}.
		}  \\
		\multicolumn{6}{l}{
			Results with respect to connectivity patterns are taken from \cite{RotatE}.
		}
	\end{tabular}
\end{table*}
\section{Introduction} \label{Introduction}
The construction and applications of knowledge graphs have attracted much attention in recent years. Many knowledge graphs, such as WordNet \cite{WordNet}, DBpedia \cite{DBpedia}, and Freebase \cite{Freebase}, have been built and successfully applied to some AI domains, including information retrieval \cite{Retrieval}, recommender systems \cite{Recommender}, question-answering systems \cite{HaoQA,HuangQA}, and natural language processing \cite{NLP}. A large knowledge graph stores billions of factual triplets in the form of directed graphs, where each triplet in the form of (\textit{head\_entity, relation, tail\_entity}) (denoted by ($h$, $r$, $t$) in this paper) stands for an edge with two end nodes in the graph, indicating that there exists a specific relationship between the head and tail entities. However, knowledge graphs still suffer from incompleteness, and link prediction, which predicts relations between entities according to existing triplets, is an important way to knowledge completion \cite{bootstrapping,Trustworthiness}. On a graph with this kind of symbolic representation, algorithms that compute semantic relationships between entities usually have high computational complexity and lack scalability. Therefore, knowledge graph embedding is proposed to improve the calculation efficiency. By embedding entities and relations into a low-dimensional vector space, we can efficiently implement the operations such as the calculation of semantic similarity between entities, which is of considerable significance to the completion, reasoning, and applications of knowledge graphs.

Quite a few methods \cite{TransE,ComplEx,ConvE,RotatE,paths,simple} have been proposed for knowledge graph embedding. Given a knowledge graph, these methods first assign one or more vectors (or matrices) to each entity and relation, then define a scoring function $f_{r}(\bm{h},\bm{t})$ to measure the plausibility of each triplet, and finally maximize the global plausibility of all triplets. Thus, scoring functions play a critical role in the methods, which determine the capability and computational complexity of models. The capability of a model is primarily influenced by the variety of \textit{connectivity patterns} and \textit{mapping properties} of relations it can model. In a knowledge graph, following \cite{RotatE}, we have four connectivity patterns of relations:
\begin{itemize}
	\item \textbf{Symmetry}. A relation $r$ is symmetric if
	\begin{center}
		$\forall x, y: r(x,y) \Rightarrow r(y,x)$
	\end{center}
	\item \textbf{Antisymmetry}. A relation $r$ is antisymmetric if 
	\begin{center}
		$\forall x, y: r(x,y) \Rightarrow \neg r(y,x)$
	\end{center}
	\item \textbf{Inversion}. Relation $r_{1}$ is inverse to relation $r_{2}$ if
	\begin{center}
		$\forall x, y: r_{1}(x,y) \Rightarrow r_{2}(y,x)$
	\end{center}
	\item \textbf{Composition}. Relation $r_{1}$ is composed of relation $r_{2}$ and relation $r_{3}$ if
	\begin{center}
		$\forall x, y, z: r_{2}(x,y) \wedge r_{3}(y,z) \Rightarrow r_{1}(x,z)$
	\end{center}
\end{itemize}
Also, following \cite{TransE}, we have four mapping properties of relations:
\begin{itemize}
	\item \textbf{One-to-One (1-to-1)}. Relation $r$ is 1-to-1 if a head can appear with at most one tail.
	\item \textbf{One-to-Many (1-to-N)}. Relation $r$ is 1-to-N if a head can appear with many tails.
	\item \textbf{Many-to-One (N-to-1)}. Relation $r$ is N-to-1 if many heads can appear with the same tail.
	\item \textbf{Many-to-Many (N-to-N)}. Relation $r$ is N-to-N if many heads can appear with many tails.
\end{itemize}
We call the latter three relations (i.e., 1-to-N, N-to-1, and N-to-N ) as the \textit{complex mapping properties}.

If an embedding method could model connectivity patterns and mapping properties as many as possible, it would potentially benefit the link prediction task. This is because methods with stronger modeling ability can preserve more structural information of knowledge graphs, so that the embeddings of entities and relations have more precise semantics. For example, in a link prediction task, a model has learned that the relation \textit{Nationality} is a \textit{Composition} of \textit{BornIn} and \textit{LocatedIn}. When triplets (Tom, \textit{BornIn}, New York), (New York, \textit{LocatedIn}, United States) both hold, it can infer that triplet (Tom, \textit{Nationality}, United States) holds. Another negative instance is that if a method cannot model N-to-1 mapping property, it probably treats Leonardo DiCaprio and Kate Winslet as the same entity when it reads relations (Leonardo DiCaprio, \textit{ActorIn}, Titanic) and (Kate Winslet, \textit{ActorIn}, Titanic).

In this paper, we proposed a novel method, namely Linear Regression Embedding (LineaRE), which interprets a relation as a linear function of entities head and tail. Specifically, our model represents each entity as a low-dimensional vector (denoted by $\bm{h}$ or $\bm{t}$), and each relation as two weight vectors and a bias vector (denoted by $\bm{w_{r}^{1}}$, $\bm{w_{r}^{2}}$, and $\bm{b_r}$), where $\bm{h}$, $\bm{t}$, $\bm{w_{r}^{1}}$, $\bm{w_{r}^{2}}$, and $\bm{b_r} \in \mathbb{R}^{k}$. Given a golden triplet ($h$, $r$, $t$), we expect the equation
$
	\bm{w_{r}^{1}} \circ \bm{h}
	+
	\bm{b_r}
	=
	\bm{w_{r}^{2}} \circ \bm{t}
$,
where $\circ$ denotes the Hadamard (element-wise) product
\footnote{
	Given two vectors $\bm{x}$ and $\bm{y}$, $[\bm{x} \circ \bm{y}]_i = [\bm{x}]_i \cdot [\bm{y}]_i$.
}. Tables \ref{ScoringFunction} \& \ref{ModelingAbility} summarize the scoring functions and the modeling capabilities of some state-of-the-art knowledge graph embedding methods, respectively. Table \ref{ScoringFunction} shows that, the parameters of ComplEx and RotatE are defined in complex number spaces and those of the others (including our model) are defined in real number spaces. Compared with most of the other models, the scoring function of our LineaRE is simpler. Table \ref{ModelingAbility} shows that, some of them (such as TransE and RotatE) are better at modeling connectivity patterns but do not consider complex mapping properties. In contrast, some others (TransH and DistMult) are better at modeling complex mapping properties but sacrifice some capability to model connectivity patterns. Our LineaRE has the most comprehensive modeling capability.

We summarize the main contributions of this paper as follows:
\begin{enumerate}
	\item We propose a novel method LineaRE for knowledge graph embedding, which is simple and can cover all the above connectivity patterns and mapping properties.
	\item We provide formal mathematical proofs to demonstrate the modeling capabilities of LineaRE.
	\item We conduct extensive experiments to evaluate our LineaRE on the task of link prediction on several benchmark datasets. The experimental results show that LineaRE has significant improvements compared with the existing state-of-the-art methods.
\end{enumerate}

\section{Related Work}
Knowledge graph embedding models can be roughly categorized into two groups \cite{Survey}: \textit{translational distance models} and \textit{semantic matching models}. In this section, we will briefly describe some related models and the differences between our models and them.

\subsection{Translational Distance Models}
The most classic translational distance model is TransE \cite{TransE}. Given a triplet ($h$, $r$, $t$), TransE interprets the relation as a translation $r$ from the head entity $h$ to the tail entity $t$, i.e., $\bm{h} + \bm{r} \approx \bm{t}$, where $\bm{h}, \bm{t}, \bm{r} \in \mathbb{R}^{k}$. When a relation is symmetric, its vector will be represented by $\bm{0}$, resulting in TransE being unable to distinguish different symmetric relations. In addition, TransE has issues in dealing with 1-to-N, N-to-1, and N-to-N relations. TransH \cite{TransH} was proposed to address the issues of TransE in modeling complex mapping properties, which interprets a relation as a translating operation $\bm{d_r}$ on a hyperplane (defined by a relation-specific normal vector $\bm{w_r}$). For a triplet ($h$, $r$, $t$), the embeddings $\bm{h}$ and $\bm{t}$ are first projected to the hyperplane, and then connected by $\bm{d_r}$. TransR \cite{TransR} supposes that entities have multiple aspects and various relations may focus on different aspects of entities. So, entity embeddings are first projected from entity space to corresponding relation space by a relation-specific projecting matrix $\bm{M_r}$, and then connected by a translation vector $\bm{r}$. Because of the high space and time complexity of matrix operation, TransR cannot be applied to large-scale knowledge graphs. TransD \cite{TransD} is an improvement of TransR, which uses entity projecting vectors ($\bm{h_p}$ and $\bm{t_p}$) and relation projecting vectors ($\bm{r_p}$) to construct mapping matrices dynamically. However, reference \cite{RotatE} has proved that such projection-based variants of TransE can not model inversion and composition patterns.

RotatE \cite{RotatE} represents each entity (relation) as a complex vector, and interprets the relation as a rotation from the head entity to the tail entity on the complex plane for a triplet. RotatE can model all the above connectivity patterns, but does not consider the complex mapping properties.

\subsection{Semantic Matching Models}
Semantic matching models measure plausibility of facts by matching latent semantics of entities and relations embodied in their vector space representations \cite{Survey}. These models can be further divided into two categories: \textit{bilinear models} and \textit{neural network-based models}.

\paragraph{Bilinear Models}
RESCAL \cite{RESCAL} is a classic bilinear model, which represents each entity as a vector and each relation as a full rank matrix which models pairwise interactions between latent factors, and the score function is defined as $f_r(\bm{h},\bm{t}) = \bm{h^{\top}M_rt}$. Obviously, such a method has two serious defects: prone to overfitting and can not be applied to large-scale knowledge graphs.
DistMult \cite{DistMult} simplifies RESCAL. For a triplet ($h$, $r$, $t$), the relation is represented as a diagonal matrix to capture pairwise interactions between the components of $\bm{h}$ and $\bm{t}$ along the same dimension, i.e., $\bm{M_r}$ is restricted to a diagonal matrix. However, $f_r(\bm{h},\bm{t}) = \bm{h^{\top}diag(r)t} = \bm{t^{\top}diag(r)h} = f_r(\bm{t},\bm{h})$ for any $\bm{h}$ and $\bm{t}$. As a result, DistMult can only deal with symmetric relations. ComplEx \cite{ComplEx} was proposed to address the issues of DistMult in modeling antisymmetric relations by introducing complex-valued embeddings. Unfortunately, ComplEx is still not capable of modeling the composition pattern, and the space and time complexity of the model are considerably increased. 

\paragraph{Neural Network Models}
Some semantic matching methods using neural network architectures have also made good progress in recent years. SME \cite{SME} employs two linear networks to capture the semantics of ($h$, $r$) and ($t$, $r$), respectively, and then measures the plausibility of the whole triplet by the inner product of the two network outputs. While non-linear fully connected neural networks are used in MLP \cite{MLP} and NTN \cite{NTN}. ConvE \cite{ConvE} is a multi-layer convolutional network model. The convolution operation is capable of extracting the feature interactions between the two embeddings $\bm{h}$ and $\bm{t}$. 

\subsection{Difference between LineaRE and Others}
From the perspective of the form of the scoring function, our proposed model LineaRE is formally similar to TransE and belongs to translational distance models. However, LineaRE is \textit{essentially different} from other variants of TransE such as TransH \cite{TransH}, TransR \cite{TransR}, and TransD \cite{TransD} which project entities onto a plane or into a specific vector space.

We regard knowledge graph embedding as a linear regression task. Given an observed triplet ($h$, $r$, $t$), the entity pair ($h$, $t$) is treated as a point, and the relation $r$ is treated as a linear mapping between $h$ and $t$. In fact, we can even regard TransE as a linear regression task, in which the slope is fixed to $\bm{1}$, but TransH \cite{TransH}, TransR \cite{TransR}, and TransD \cite{TransD} can not. One may argue that LineaRE is very similar to TransR if the matrix $\bm{M_r}$ is constrained as a diagonal matrix. However, the slope will be also fixed to $\bm{1}$ in such a simplified TransR.

\section{The Proposed Method}
In this section, we will describe our proposed model LineaRE in detail. First, we provide the formal definition of LineaRE and mathematically prove the powerful modeling capabilities of LineaRE with respect to the connectivity patterns and mapping properties. Then, we introduce the loss function used in our method.

\subsection{Linear Regression Embedding}
We treat knowledge graph embedding as a \textit{Linear Regression} task. A knowledge graph is a directed graph $G=(E,R,T)$, where $E$ is the set of entities, $R$ is the set of relations, and $T=\{ (h, r, t) \}$ denotes the set of all observed triplets. Let $HT_r=\{ (h, t) | (h, r, t) \in T\}$ denote all the entity pairs involving a specific relation $r$. Our main idea is that entity pairs $HT_r$ are treated as points, and the relation $r$ is treated as a linear mapping between head entities and tail entities.
For example, \{(\textit{football}, \textit{\textit{\_has\_part}}, \textit{foot}), (\textit{classroom}, \textit{\_has\_part}, \textit{room})\} is the set of triplets involving relation \textit{\_has\_part}. Our objective is to make the points (\textit{football}, \textit{foot}) and (\textit{classroom}, \textit{room}) lie on the straight line defined by \textit{\_has\_part} in the rectangular coordinate system.

Specifically, we represent each entity as a low-dimensional vector ($\bm{h}$ or $\bm{t}$), and each relation as two weight vectors ($\bm{w_r^{1}}$, $\bm{w_r^{2}}$) and a bias vector ($\bm{b_r}$), where $\bm{h}$, $\bm{t}$, $\bm{w_r^{1}}$, $\bm{w_r^{2}}$, and $\bm{b_r} \in \mathbb{R}^{k}$. Given a golden triplet ($h$, $r$, $t$), we expect
\begin{equation}
	\bm{w_{r}^{1}} \circ \bm{h}
	+
	\bm{b_r}
	=
	\bm{w_{r}^{2}} \circ \bm{t}
\end{equation}
where $\circ$ denotes the Hadamard (element-wise) product. Fig. \ref{Illustration} provides a simple illustration of LineaRE.
\begin{figure}[t]
	\centering
	\includegraphics[width=0.49\textwidth]{./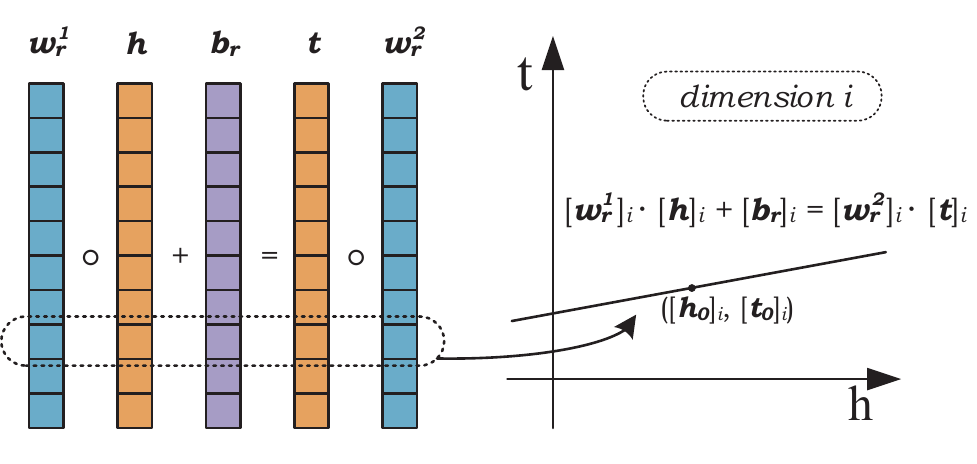}
	\caption{
		Simple illustration of LineaRE. The left part is vectorized representation of entities and relations, and the right part illustrates how one dimension of the vectors is represented in a rectangular coordinate system. The point ($[\bm{h_0}]_i$, $[\bm{t_0}]_i$) lies on the straight line defined by ($[\bm{b_{r}}]_i$, $[\bm{w_{r}^{1}}]_i$, $[\bm{w_{r}^{2}}]_i$) if the triplet ($h_0$, $r$, $t_0$) holds.
	}
	\label{Illustration}
\end{figure}
For dimension $i$, we have
$
\bm{[w_{r}^{1}]_i} \cdot \bm{[h]_i}
+
\bm{[b_r]_i}
=
\bm{[w_{r}^{2}]_i} \cdot \bm{[t]_i}
$, i.e., each dimension of the relation can be represented as a straight line in a rectangular coordinate system, and the point ($\bm{[h]_i}$, $\bm{[t]_i}$) should lie on the straight line.

The scoring function of LineaRE is:
\begin{equation}
	f_{r}(\bm{h},\bm{t})
	=
	\left \|
	\bm{w_r^{1}} \circ \bm{h}
	+
	\bm{b_r}
	-
	\bm{w_r^{2}} \circ \bm{t}
	\right \|_1
\end{equation}
where $\|\bm{x}\|_1$ denotes the L1-Norm of vector $\bm{x}$. We expect a lower score for observed triplets and a higher score for negative triplets which do not hold.

\subsection{Modeling Capabilities of LineaRE} \label{Capabilities}
The connectivity patterns and mapping properties of relations are implicit in the properties of the straight lines. Formally, we have main results as follows:
\begin{theorem}
	LineaRE can model symmetry, antisymmetry, inversion and composition patterns.
\end{theorem}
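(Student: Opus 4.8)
The plan is to show, for each of the four patterns, that there exists an assignment of the relation parameters $(\bm{w_r^1},\bm{w_r^2},\bm{b_r})$ under which the defining implication holds for \emph{all} entity embeddings. Since a golden triplet $(h,r,t)$ corresponds to the zero-score condition $\bm{w_r^1}\circ\bm{h}+\bm{b_r}=\bm{w_r^2}\circ\bm{t}$, each pattern reduces to a statement about when one such element-wise linear equation forces another. I would treat the equations coordinate-wise, since the Hadamard product decouples the $k$ dimensions; it therefore suffices to reason about a single scalar line $w^1 x + b = w^2 y$ and then lift the conclusion back to vectors.

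For \textbf{symmetry} I would set $\bm{w_r^1}=\bm{w_r^2}$ and $\bm{b_r}=\bm{0}$; the golden condition becomes $\bm{w_r^1}\circ(\bm{h}-\bm{t})=\bm{0}$, which is manifestly invariant under swapping $\bm{h}$ and $\bm{t}$, so $r(x,y)\Rightarrow r(y,x)$. For \textbf{antisymmetry} I would instead keep $\bm{w_r^1}=\bm{w_r^2}$ but take $\bm{b_r}\neq\bm{0}$: if both $(x,r,y)$ and $(y,r,x)$ held, adding the two zero-score equations would give $2\bm{b_r}=\bm{0}$, a contradiction, so $r(x,y)\Rightarrow\neg r(y,x)$.

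\textbf{Inversion} and \textbf{composition} are handled by eliminating entities between equations. For inversion of $r_1$ by $r_2$, I would choose $\bm{w_{r_2}^1}=\bm{w_{r_1}^2}$, $\bm{w_{r_2}^2}=\bm{w_{r_1}^1}$, and $\bm{b_{r_2}}=-\bm{b_{r_1}}$; substituting shows the zero-score equation for $(y,r_2,x)$ becomes identical to that for $(x,r_1,y)$. For composition, from $(x,r_2,y)$ and $(y,r_3,z)$ I would eliminate $\bm{y}$ by multiplying the two zero-score equations element-wise by $\bm{w_{r_3}^1}$ and $\bm{w_{r_2}^2}$ respectively, obtaining a single relation $(\bm{w_{r_2}^1}\circ\bm{w_{r_3}^1})\circ\bm{x}+(\bm{w_{r_3}^1}\circ\bm{b_{r_2}}+\bm{w_{r_2}^2}\circ\bm{b_{r_3}})=(\bm{w_{r_2}^2}\circ\bm{w_{r_3}^2})\circ\bm{z}$; setting $\bm{w_{r_1}^1}=\bm{w_{r_2}^1}\circ\bm{w_{r_3}^1}$, $\bm{w_{r_1}^2}=\bm{w_{r_2}^2}\circ\bm{w_{r_3}^2}$, and $\bm{b_{r_1}}=\bm{w_{r_3}^1}\circ\bm{b_{r_2}}+\bm{w_{r_2}^2}\circ\bm{b_{r_3}}$ makes this exactly the zero-score equation for $(x,r_1,z)$.

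I expect composition to be the main obstacle, for two reasons. First, it is the case that genuinely exploits the \emph{two} independent weight vectors: the multiplicative bookkeeping $\bm{w_{r_1}^1}=\bm{w_{r_2}^1}\circ\bm{w_{r_3}^1}$ has no analogue in TransE, where the fixed unit slope would collapse composition to pure addition. Second, the elimination step silently assumes the weights are nonzero so that the intermediate $\bm{y}$ can be cleared consistently; I would either restrict attention to nonzero weight coordinates or note that a zero coordinate can be perturbed without affecting the existence claim. The remaining patterns are routine once the coordinate-wise linear picture is in place.
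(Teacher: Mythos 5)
Your proposal is correct and takes essentially the same route as the paper: both reduce each pattern to the coordinate-wise zero-score equation $\bm{w_r^{1}} \circ \bm{h} + \bm{b_r} = \bm{w_r^{2}} \circ \bm{t}$ and derive or exhibit parameter constraints per pattern, and your composition assignment ($\bm{w_{r_1}^{1}} = \bm{w_{r_2}^{1}} \circ \bm{w_{r_3}^{1}}$, $\bm{w_{r_1}^{2}} = \bm{w_{r_2}^{2}} \circ \bm{w_{r_3}^{2}}$, $\bm{b_{r_1}} = \bm{b_{r_2}} \circ \bm{w_{r_3}^{1}} + \bm{b_{r_3}} \circ \bm{w_{r_2}^{2}}$) is literally the paper's. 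The only differences are cosmetic: the paper states two-branch conditions (e.g., symmetry also via $\bm{w_r^{1}} = -\bm{w_r^{2}}$, which it favors since your $\bm{w_r^{1}} = \bm{w_r^{2}}, \bm{b_r} = \bm{0}$ witness forces $\bm{h}$ and $\bm{t}$ to coincide on the support of $\bm{w_r^{1}}$), whereas you exhibit single witnesses, which suffices for the existential claim; and your worry about zero weights in the composition elimination is unneeded, since you only multiply equations rather than divide, which is valid unconditionally.
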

\begin{figure*}[t]
	\centering
	\subfigure[Symmetry.]{
		\includegraphics[width=0.23\linewidth]{./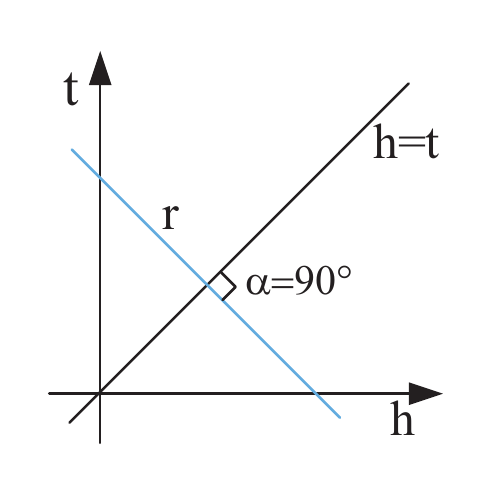}
		\label{Symmetry}
	}
	\subfigure[Antisymmetry.]{
		\includegraphics[width=0.23\linewidth]{./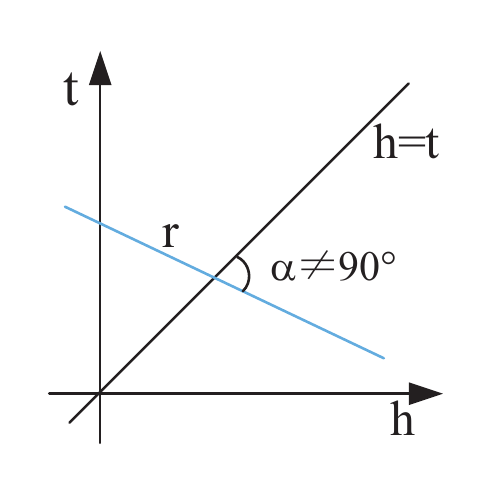}
		\label{Antisymmetry}
	}
	\subfigure[Inversion.]{
		\includegraphics[width=0.23\linewidth]{./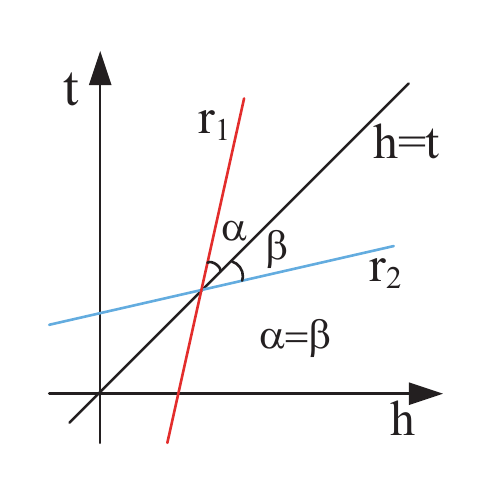}
		\label{Inversion}
	}
	\subfigure[Composition.]{
		\includegraphics[width=0.23\linewidth]{./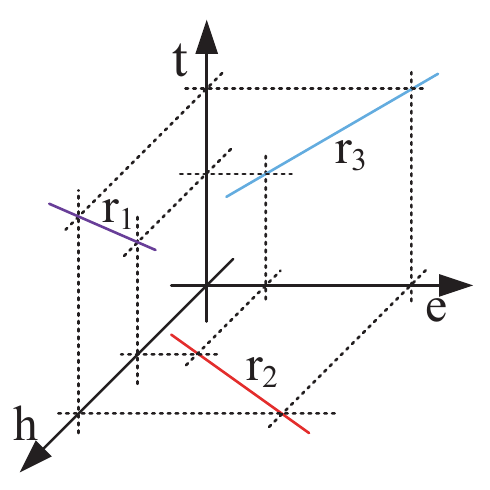}
		\label{Composition}
	}
	\caption{
		Illustrations of LineaRE modeling connectivity patterns.
	}
	\label{Pattern}
\end{figure*}
\begin{figure}[t]
	\centering
	\subfigure[1-to-N.]{
		\includegraphics[width=0.465\linewidth]{./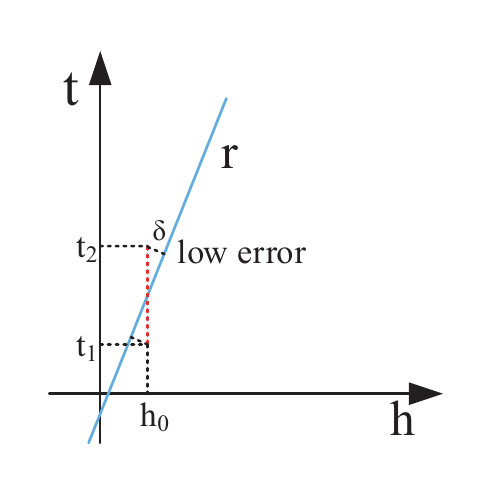}
		\label{1-to-N}
	}
	\subfigure[Extreme 1-to-N.]{
		\includegraphics[width=0.465\linewidth]{./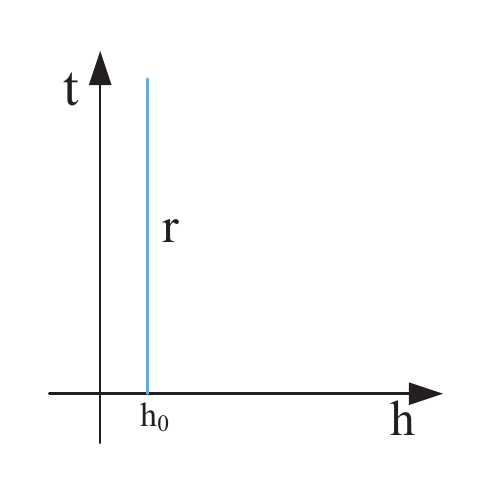}
		\label{1-to-N-any}
	}
	\caption{
		Illustrations of LineaRE modeling complex mapping properties.
	}
	\label{Mapping}
\end{figure}
\begin{proof}
	With $\bm{h}$ and $\bm{t}$ as the axes, LineaRE represents each dimension of a relation as a straight line in the rectangular coordinate system. Fig. \ref{Pattern} illustrates the LineaRE model in a one-dimensional case.
	\begin{itemize}
		\item \textbf{Symmetry} (Each straight line of the relation is symmetrical with respect to $\bm{h} = \bm{t}$, shown in Fig. \ref{Symmetry}).
		\begin{flushleft}
			$
			\left\{
			\begin{aligned}
			&\bm{w_r^{1}} \circ \bm{h} + \bm{b_r} = \bm{w_r^{2}} \circ \bm{t}  \\
			&\bm{w_r^{1}} \circ \bm{t} + \bm{b_r} = \bm{w_r^{2}} \circ \bm{h}  \\
			\end{aligned}
			\right.
			\Leftrightarrow
			\left\{
			\begin{aligned}
			&\bm{w_r^{1}} = -\bm{w_r^{2}} &\circled{1}  \\
			&or  \\
			&\bm{h} = \bm{t} &\circled{2} \\
			\end{aligned}
			\right.
			$
		\end{flushleft}
		$\circled{1}$ When
		$\bm{w_r^{1}} = -\bm{w_r^{2}}$ holds, $\bm{w_r^{1}} \circ (\bm{h} + \bm{t}) + \bm{b_r} = \bm{0}$, then we have $(\bm{h} + \bm{t}) = \bm{\xi}$, where $\bm{\xi}$ is a constant vector. The slope of the straight line is $-1$.
		
		$\circled{2}$ When
		$\bm{h} = \bm{t}$ holds, $ (\bm{w_r^{1}} - \bm{w_r^{2}}) \circ \bm{h} + \bm{b_r} = \bm{0}$, then we have
		$\bm{w_r^{1}} = \bm{w_r^{2}} \wedge \bm{b_r} = \bm{0}$, i.e., the slope is $1$ and the intercept is $0$.
		
		To sum up, when
		$\bm{w_r^{1}} = -\bm{w_r^{2}}$
		or
		$\bm{w_r^{1}} = \bm{w_r^{2}} \wedge \bm{b_r} = \bm{0}$,
		LineaRE can model symmetry pattern. \\
		
		\item \textbf{Antisymmetry} (There exist some straight lines not symmetrical with respect to $\bm{h} = \bm{t}$ in the relation, shown in Fig. \ref{Antisymmetry}).
		
		When
		$\bm{w_r^{1}} \neq -\bm{w_r^{2}}$
		and
		$\bm{w_r^{1}} \neq \bm{w_r^{2}} \vee \bm{b_r} \neq \bm{0}$,
		LineaRE can model symmetry pattern. \\
		In other words, $\exists i \in [1, k]$, $[\bm{w_r^{1}}]_i \neq - [\bm{w_r^{2}}]_i$
		and
		$[\bm{w_r^{1}}]_i \neq [\bm{w_r^{2}}]_i \vee [\bm{b_r}]_i \neq 0$.
		
		\item \textbf{Inversion} (The straight lines of $r_1$ and $r_2$ along the same dimension are symmetrical with respect to $\bm{h} = \bm{t}$, shown in Fig. \ref{Inversion}).
		\begin{flushleft}
			$
			\left\{
			\begin{aligned}
			&\bm{w_{r_1}^{1}} \circ \bm{h} + \bm{b_{r_1}} = \bm{w_{r_1}^{2}} \circ \bm{t}  \\
			&\bm{w_{r_2}^{1}} \circ \bm{t} + \bm{b_{r_2}} = \bm{w_{r_2}^{2}} \circ \bm{h}  \\
			\end{aligned}
			\right.
			\Leftrightarrow
			\left\{
			\begin{aligned}
			&\bm{w_{r_1}^{1}} \circ \bm{w_{r_2}^{1}} =	\bm{w_{r_1}^{2}} \circ \bm{w_{r_2}^{2}}  \\
			&\bm{b_{r_1}} \circ \bm{w_{r_2}^{2}} + \bm{b_{r_2}} \circ \bm{w_{r_1}^{1}} = 
			\bm{b_{r_1}} \circ \bm{w_{r_2}^{1}} + \bm{b_{r_2}} \circ \bm{w_{r_1}^{2}} = \bm{0}  \\
			\end{aligned}
			\right.
			$
		\end{flushleft}
		That is, the slopes of the straight lines along the same dimension in $r_1$ and $r_2$ are mutually reciprocal, and the intercepts are symmetrical with respect to $\bm{h} = \bm{t}$.
		
		\item \textbf{Composition} (Composition of linear functions, shown in Fig. \ref{Composition}.)
		\begin{flushleft}
			$
			\left\{
			\begin{aligned}
			&\bm{w_{r_2}^{1}} \circ \bm{h} + \bm{b_{r_2}} = \bm{w_{r_2}^{2}} \circ \bm{e}  \\
			&\bm{w_{r_3}^{1}} \circ \bm{e} + \bm{b_{r_3}} = \bm{w_{r_3}^{2}} \circ \bm{t}  \\
			&\bm{w_{r_1}^{1}} \circ \bm{h} + \bm{b_{r_1}} = \bm{w_{r_1}^{2}} \circ \bm{t}  \\
			\end{aligned}
			\right.
			\Leftrightarrow
			\left\{
			\begin{aligned}
			&\bm{w_{r_1}^{1}} = \bm{w_{r_2}^{1}} \circ \bm{w_{r_3}^{1}}, \quad \bm{w_{r_1}^{2}} = \bm{w_{r_2}^{2}} \circ \bm{w_{r_3}^{2}}  \\
			&\bm{b_{r_1}} = \bm{b_{r_2}} \circ \bm{w_{r_3}^{1}} + \bm{b_{r_3}} \circ \bm{w_{r_2}^{2}}  \\
			\end{aligned}
			\right.
			$
		\end{flushleft}
		$r_2$ is a linear mapping from $h$ to $e$, and $r_3$ is a linear mapping from $e$ to $t$, then a new linear mapping from $h$ to $t$ (ie., $r_1$) can be obtained by combining $r_2$ and $r_3$.
	\end{itemize}
\end{proof}

\begin{theorem}
	LineaRE can model 1-to-1, 1-to-N, N-to-1 and N-to-N relations.
\end{theorem}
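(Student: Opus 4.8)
The plan is to argue coordinate-wise, exactly as in the proof of Theorem 1, using the fact that a golden triplet $(h,r,t)$ forces $[\bm{w_r^1}]_i [\bm{h}]_i + [\bm{b_r}]_i = [\bm{w_r^2}]_i [\bm{t}]_i$ in every dimension $i$. For each mapping property I would exhibit a parameter configuration under which all of the required triplets can hold simultaneously while the distinct entities stay distinct; this is precisely what TransE fails to achieve, since its fixed unit slope permits no decoupling between $\bm{h}$ and $\bm{t}$ and hence collapses the entities that ought to differ.

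First I would treat 1-to-N. Suppose a single head $h$ appears with distinct tails $t_1,\dots,t_n$. Subtracting the per-dimension equations for two such triplets yields $[\bm{w_r^2}]_i\bigl([\bm{t_j}]_i - [\bm{t_{j'}}]_i\bigr)=0$, so on any coordinate where the tails differ I am forced to set $[\bm{w_r^2}]_i=0$; there the constraint degenerates to $[\bm{w_r^1}]_i[\bm{h}]_i + [\bm{b_r}]_i = 0$, which pins down $[\bm{h}]_i$ but leaves $[\bm{t}]_i$ entirely free. Choosing enough such coordinates to separate the tails and fixing the remaining parameters consistently then shows 1-to-N is representable; this is the \emph{vertical line} picture of Fig.~\ref{1-to-N} together with its extreme in Fig.~\ref{1-to-N-any}. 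The N-to-1 case is the mirror image, obtained by zeroing $[\bm{w_r^1}]_i$ on the coordinates that must distinguish the many heads sharing a common tail.

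For N-to-N I would combine the two mechanisms: partition the $k$ coordinates, using some with $[\bm{w_r^2}]_i=0$ to absorb tail multiplicity and others with $[\bm{w_r^1}]_i=0$ to absorb head multiplicity, so that many heads and many tails coexist under one parameter choice. Finally, for 1-to-1 I would take both weight vectors nowhere-zero; then each coordinate defines an affine bijection $[\bm{t}]_i = \bigl([\bm{w_r^1}]_i[\bm{h}]_i + [\bm{b_r}]_i\bigr)/[\bm{w_r^2}]_i$, so a head determines its tail uniquely and conversely, which is exactly a 1-to-1 mapping.

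The main obstacle is not any single computation but pinning down what ``can model'' should mean and then checking that the exhibited configurations are genuinely consistent: I must verify that zeroing a weight component to gain freedom on one side does not inadvertently force a collapse on the other side or render the shared-entity constraints unsatisfiable. Making this simultaneity precise for N-to-N --- guaranteeing that a single parameter setting accommodates all heads and all tails at once --- is the most delicate step, though the coordinate-wise decoupling afforded by the free slopes is what makes it go through.
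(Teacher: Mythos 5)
Your proposal is correct and follows essentially the same route as the paper: subtracting the per-triplet constraints to get $\bm{w_r^2}\circ(\bm{t_1}-\bm{t_2})=\bm{0}$, splitting into the cases $[\bm{w_r^2}]_i=0$ (tails free to differ) versus $[\bm{t_1}]_i=[\bm{t_2}]_i$, mirroring for N-to-1, and mixing both kinds of coordinates for N-to-N. The only notable difference is that the paper resolves the consistency worry you raise at the end by relaxing exact zeros to approximately-zero weights (steep slopes with tolerance $\delta$), so that shared-entity constraints need only hold approximately rather than pinning coordinates exactly.
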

\begin{proof}
	\textbf{1-to-1:} Obviously, LineaRE can model 1-to-1 relations. \\
	\textbf{1-to-N:} Suppose that we have two observed triplets ($h_0$, $r$, $t_1$) and ($h_0$, $r$, $t_2$):
	\begin{flushleft}
		$
		\left\{
		\begin{aligned}
		&\bm{w_r^{1}} \circ \bm{h_0} + \bm{b_r} = \bm{w_r^{2}} \circ \bm{t_1}  \\
		&\bm{w_r^{1}} \circ \bm{h_0} + \bm{b_r} = \bm{w_r^{2}} \circ \bm{t_2}  \\
		\end{aligned}
		\right.
		\Rightarrow
		\begin{aligned}
		&\bm{w_{r}^{2}}\circ(\bm{t_1}-\bm{t_2}) = \bm{0}
		\end{aligned}
		$
	\end{flushleft}
	$\circled{1} \bm{w_{r}^{2}} = \bm{0}$. \\
	As shown in Fig. \ref{1-to-N-any}, the straight line is one dimension of relation $r$, parallel to $t$ axis. Thus $h_0$ can appear with $\forall t \in \mathbb{R}$ under relation $r$.
	However, $[\bm{w_r^2}]_i$ may actually be a value approximately equal to $\bm{0}$, resulting in a steep slope of the straight line\footnote{If the slope of a line is positive (negative), the larger (smaller) the slope, the steeper it is. We use the term \textit{gentle} slope as the opposite of the \textit{steep} slope in this study.}, as shown in Fig. \ref{1-to-N}.
	Let $\delta$ be the maximum acceptable error, then $h_0$ can appear with multiple $t$ values with low errors, where $t \in [t_1, t_2]$. The steeper the slope is, the larger the range of $t$ values is. Thus, multiple tail entities appearing with the same head entity can be appropriately far away from each other in such dimensions. \\
	$\circled{2} \bm{t_1}-\bm{t_2} = \bm{0}$. \\
	For the dimensions where $\bm{t_1}-\bm{t_2}=\bm{0}$, tail entities are closer to each other. Slopes are not necessarily to be steep in such dimensions. \\
	\textbf{N-to-1}: Similarly, LineaRE can model N-to-1 relations. \\
	\textbf{N-to-N}: N-to-N relations contain both straight lines with steep slopes and straight lines with gentle slopes.
\end{proof}

\begin{corollary}
	The TransE model is a special case of LineaRE.
\end{corollary}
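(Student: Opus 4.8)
The plan is to exhibit an explicit restriction of LineaRE's parameter space under which its scoring function collapses exactly onto that of TransE, thereby realizing every TransE model as a LineaRE model. Recalling from Table~\ref{ScoringFunction} that LineaRE scores a triplet by $f_r(\bm{h},\bm{t}) = \left\| \bm{w_r^{1}}\circ\bm{h} + \bm{b_r} - \bm{w_r^{2}}\circ\bm{t} \right\|_1$ while TransE uses $\left\| \bm{h}+\bm{r}-\bm{t} \right\|_1$, the natural candidate is to fix both weight (slope) vectors to the all-ones vector and to identify the bias with TransE's translation vector.

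First I would set $\bm{w_r^{1}} = \bm{w_r^{2}} = \bm{1}$ and $\bm{b_r} = \bm{r}$ for every relation $r$. Using the defining property of the Hadamard product that $\bm{1}\circ\bm{x} = \bm{x}$ for any $\bm{x}\in\mathbb{R}^{k}$, the LineaRE score reduces term-by-term to $\left\| \bm{h} + \bm{r} - \bm{t} \right\|_1$, which is precisely TransE's scoring function in its L1 form. Because the entity embeddings $\bm{h},\bm{t}\in\mathbb{R}^{k}$ range over the same real space in both models, this shows that every parameter assignment of TransE is reproduced by a LineaRE parameter assignment lying in the slice where $\bm{w_r^{1}}=\bm{w_r^{2}}=\bm{1}$. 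Hence the set of scoring functions expressible by TransE is contained in that expressible by LineaRE, which is the intended meaning of \emph{special case}.

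I would then connect this back to the linear-regression interpretation developed in Section~\ref{Capabilities}: fixing the weight vectors to $\bm{1}$ corresponds exactly to fixing the slope of every one-dimensional line to $1$, so TransE is the degenerate LineaRE in which relations may only translate but never rescale the head and tail coordinates. This also explains, as a free consequence, why LineaRE inherits all of TransE's modeling strengths (antisymmetry, inversion, composition), while the additional freedom in $\bm{w_r^{1}}$ and $\bm{w_r^{2}}$ supplies the extra capacity for symmetry and the complex mapping properties established in Theorems~1 and~2.

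There is essentially no hard step here, since the argument is a one-line substitution; the only point deserving care is matching the norm. As LineaRE is stated with the $L_1$ norm, the clean identity is with the $L_1$ variant of TransE, so I would either restrict the statement to that variant or remark that the $L_2$ case follows identically by replacing $\left\|\cdot\right\|_1$ with $\left\|\cdot\right\|_2$ throughout. I would also be explicit that this is a statement about containment of function families under a parameter constraint, not a claim that the two models reach the same learned optimum.
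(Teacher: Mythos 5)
Your proposal is correct and takes essentially the same approach as the paper: a parameter restriction under which LineaRE collapses onto TransE, interpreted as fixing every one-dimensional slope to $1$. In fact your instantiation $\bm{w_r^{1}}=\bm{w_r^{2}}=\bm{1}$, $\bm{b_r}=\bm{r}$ is a slightly sharper rendering of the paper's ``let $\bm{w_r^{1}}=\bm{w_r^{2}}$'' (which by itself only fixes the slope to $1$ and yields a per-dimension reweighted score $\left\| \bm{w_r^{1}}\circ(\bm{h}-\bm{t})+\bm{b_r} \right\|_1$ rather than TransE's literal scoring function), so your substitution recovers TransE exactly.
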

\begin{proof}
	Let $\bm{w_r^{1}} = \bm{w_r^{2}}$, our LineaRE becomes TransE, ie., TransE defines a relation as straight lines with a constant slope of $\bm{1}$, which is a special case of LineaRE.
\end{proof}

\subsection{Loss Function}
A knowledge graph only contains positive triplets, and the way to construct negative triplets is to randomly replace the head or tail entity of an observed triplet with entities in $E$, which is called negative sampling. Many negative sampling methods have been proposed \cite{Kbgan,NSCaching,IGAN}, among which the self-adversarial negative sampling method \cite{RotatE} dynamically adjusts the weights of negative samples according to their scores as the training goes on. We adopt this negative sampling technique. Specifically, the weights (i.e., probability distribution) of negative triplets for a golden triplet ($h$, $r$, $t$) are as follows:
\begin{equation}
	p(
		h_{j}^{'}, r, t_{j}^{'} | \{(h_i^{'}, r, t_i^{'})\}
	)
	=
	\frac{
		\exp(\alpha f_r(\bm{h_{j}^{'}}, \bm{t_{j}^{'}}))
	}
	{
		\sum_{i} \exp(\alpha f_r(\bm{h_{i}^{'}}, \bm{t_{i}^{'}}))
	}
\end{equation}
where $\alpha$ is the temperature of sampling, $\{(h_i^{'}, r, t_i^{'})\}$ are negative triplets for ($h$, $r$, $t$).

Then, we define the logistic loss function for an observed triplet and its negative samples:
\begin{equation}
	softplus(\bm{x}) = \frac{1}{\beta}log(1+exp(\beta \bm{x}))
\end{equation}
\begin{align}
	L & = softplus(f_r(\bm{h}, \bm{t})-\gamma) \nonumber \\
	  & + \sum_{i=1}^{n} p(h_{i}^{'}, r, t_{i}^{'})softplus(\gamma - f_r(\bm{h_{i}^{'}}, \bm{t_{i}^{'}})) \nonumber \\
	  & + \frac{\lambda}{|E|} \sum_{e \in E} \left \| \bm{e} \right \|_{2}^{2}
\end{align}
where $\gamma$ is a fixed margin, $\beta$ is a parameter that can adjust the margin between positive and negative sample scores; $\lambda$ is the regularization coefficient; $E$ is the set of entities in the knowledge graph. Adam \cite{Adam} is used as the optimizer.

\section{Experiments}
In this section, we conduct extensive experiments to evaluate the proposed LineaRE model.

\subsection{Datasets}
Four widely used benchmark datasets are used in our link prediction experiments: FB15k \cite{TransE}, WN18 \cite{TransE}, FB15k-237 \cite{Toutanova2015Observed}, and WN18RR \cite{ConvE}. The statistical information of these datasets is summarized in Table \ref{Datasets}.
\begin{table*}[t]
	\caption{
		Statistical information of the datasets used in experiments.
	}
	\label{Datasets}
	\begin{center}
	\begin{tabular}{|c|c|c|c|c|c||c|c|c|c||c|c|c|c|}
		\hline
		\multirow{2}{*}{\textbf{Datasets}}
		& \multirow{2}{*}{\textbf{\#E}}
		& \multirow{2}{*}{\textbf{\#R}}
		& \multirow{2}{*}{\textbf{\#Train}}
		& \multirow{2}{*}{\textbf{\#Valid}}
		& \multicolumn{9}{c|}{\textbf{\#Test (\%)}}  \\
		\cline{6-14}
		&&&&
		& \textbf{\textit{\#Total}}
		& \textbf{\textit{\#Sym}}	& \textbf{\textit{\#Inv}}	& \textbf{\textit{\#Com}}
		& \textbf{\textit{\#Others}}
		& \textbf{\textit{\#1-to-1}}  & \textbf{\textit{\#1-to-N}}  & \textbf{\textit{\#N-to-1}}
		& \textbf{\textit{\#N-to-N}} \\
		\hline
		FB15k
		& 14,951 & 1,345 & 483,142 & 50,000 & 59,071
		& 7.34	& \textbf{70.22}	& \textbf{22.37}	& 0.06
		& 1.63	& 9.56				& 15.80				& \textbf{73.02} \\
		\hline
		WN18
		& 40,943 & 18	 & 141,442 & 5,000  & 5,000
		& \textbf{21.74}	& \textbf{72.22}	& 3.0				& 3.04
		& 0.84				& \textbf{36.94}	& \textbf{39.62}	& \textbf{22.60}  \\
		\hline
		FB15k-237
		& 14,541	& 237	& 272,115	& 17,535	& 20,466
		& 0		& 0		& \textbf{90.40}	& 9.60
		& 0.94	& 6.32	& \textbf{22.03}	& \textbf{70.72}  \\
		\hline
		WN18RR
		& 40,943 & 11    & 86,835  & 3,034	& 3,134
		& \textbf{34.65}	& 0.29	& 8.33				& \textbf{56.73}
		& 1.34				& 15.16	& \textbf{47.45}	& \textbf{36.06} \\
		\hline
		\multicolumn{14}{l}{
			\#Sym (\#Inv): test triplets ($h$, $r$, $t$) that can be inferred via a directly linked triplet ($t$, $r$, $h$) (($t$, $r^{'}$, $h$)) in train set.
		} \\
		\multicolumn{14}{l}{
			\#Com: test triplets ($h$, $r$, $t$) that can be inferred via a two-step or three-step path ($h$, $p$, $t$) in train set.
		} \\
	\end{tabular}
	\end{center}
\end{table*}
\begin{itemize}
	\item FB15k is a subset of Freebase \cite{Freebase}. We can see that 70.22\% of test triplets ($h$, $r$, $t$) can be inferred via a directly linked triplet ($t$, $r^{'}$, $h$) in the training set, and 22.37\% of test triplets can be inferred via a two-step or three-step path ($h$, $p$, $t$) in the training set. Thus, the key of link prediction on FB15k is to model inversion and composition patterns.
	\item WN18 is a subset of WordNet \cite{WordNet}. The key of link prediction on WN18 is to model inversion and symmetry patterns.
	\item FB15k-237 is a subset of FB15k, where inverse relations are deleted. The key of link prediction on FB15k-237 is to model composition patterns.
	\item WN18RR is a subset of WN18, where inverse relations are deleted. The key of link prediction on WN18RR is to model symmetry patterns.
\end{itemize}
In all these datasets, triplets involving 1-to-1 relations only account for about 1\%. Thus, the methods that can model complex mapping properties will have certain advantages.

\subsection{Experimental Settings}
\paragraph*{Task Description}
Let $\bigtriangleup$ be the test set and $E$ be the set of all entities in the dataset. For each test triplet ($h$, $r$, $t$) $\in \bigtriangleup$, we replace the tail entity $t$ by each entity $e_i \in E$ in turn, forming candidate triplets \{($h$, $r$, $e_i$)\}. Some candidate triplets may exist in the datasets (training, validation, or test sets), and it is common practice to delete them (except the current test triplets). The knowledge graph embedding model is then used to calculate the plausibility of these corrupted triplets and sort them in ascending order. Eventually, the ranks of the correct entities are stored. The prediction process for the head entity is the same.

\paragraph*{Evaluation Protocol}
We report several standard evaluation metrics: the Mean of those predicted Ranks (MR), the Mean of those predicted Reciprocal Ranks (MRR), and the Hits@N (i.e., the proportion of correct entities ranked in the top N, where $\text{N}=1, 3, 10$). A lower MR is better while higher MRR and Hits@N are better.

\paragraph*{Baselines}
We compare the proposed model LineaRE with seven state-of-the-art models\footnote{
	We did not include TransR in the comparison due to its high complexity. Even a NVIDIA GeForce RTX 2080 Ti GPU with 11 GB memory can not run TransR when $k=d=100$, batchsize = 512, and 128 negative samples for each observed triplet. Moreover, reference \cite{TransD} has demonstrated that TransD is better than TransR.
} listed in Table \ref{ScoringFunction} on link prediction tasks. These models used in comparison were published in top-tier AI-related conferences, representing the highest technical level of knowledge graph embedding.

\paragraph*{The fairness of the Comparison}
For the \textbf{fairness} in comparison, all the models except ConvE use \textbf{the same negative sampling technique} (i.e., the self-adversarial negative sampling method proposed in \cite{RotatE}), and the hyperparameters of different models are selected from the same ranges. Because ConvE is quite different from the other models in principle, to report its highest performance, we directly extract the experimental results from the original paper \cite{ConvE}.

\paragraph*{Hyperparameter Settings}
The hyperparameters are selected according to the performance on the validation dataset via grid searching. We set the ranges of hyperparameters as follows: temperature of sampling $\alpha \in$ \{0.5, 1.0\}, fixed margin $\gamma \in$ \{6, 9, 12, 15, 18, 24, 30\}, $\beta$ in softplus $\in$ \{0.75, 1.0, 1.25\}, embedding size $k \in$ \{250, 500, 1000\}, batchsize $b \in$ \{512, 1024, 2048\}, and number of negative samples for each observed triplet $n \in$ \{128, 256, 512, 1024\}. Optimal configurations for our LineaRE are: $\alpha$=1.0, $\beta$=1.25, $\gamma$=15, $k$=1000, $b$=2048 and $n$=128 on FB15k; $\alpha$=0.5, $\beta$=1.25, $\gamma$=6, $k$=500, $b$=1024 and $n$=512 on WN18; $\alpha$=0.5, $\beta$=1.0, $\gamma$=12, $k$=1000, $b$=2048 and $n$=128 on FB15k-237; $\alpha$=0.5, $\beta$=1.0, $\gamma$=12, $k$=1000, $b$=2048 and $n$=128 on WN18RR.

\subsection{Main Results}
The main results on FB15k and WN18 are summarized in Table \ref{FB15kWN18}.
\begin{table*}[t]
	\caption{
		Link prediction results on FB15k and WN18.
	}
	\label{FB15kWN18}
	\begin{center}
		\begin{tabular}{|c||c|c|c|c|c||c|c|c|c|c|}
			\hline
			\multirow{2}{*}{}
			& \multicolumn{5}{c||}{\textbf{FB15k}}
			& \multicolumn{5}{c|}{\textbf{WN18}}   \\
			\cline{2-6} \cline{7-11}
			& \textbf{\textit{MR}}  & \textbf{\textit{MRR}}	 & \textbf{\textit{Hits@1}}  & \textbf{\textit{Hits@3}}  & \textbf{\textit{Hits@10}}
			& \textbf{\textit{MR}}  & \textbf{\textit{MRR}}  & \textbf{\textit{Hits@1}}  & \textbf{\textit{Hits@3}}  & \textbf{\textit{Hits@10}} \\
			\hline
			TransE \cite{TransE}
			& 34			& .737	& .650	& .799	& .874
			& \textbf{145}	& .821	& .713	& .930	& .955  \\
			TransH \cite{TransH}
			& \textbf{32}	& .748	& .661	& .813	& .884
			& 452			& .823	& .721	& .929	& .954  \\
			TransD \cite{TransD}
			& 33	& .750	& .664	& .817	& .886
			& 261	& .822	& .722	& .926	& .956  \\
			DistMult \cite{DistMult}
			& 59	& .789	& .730	& .830	& .887
			& 496	& .810	& .694	& .922	& .949  \\
			ComplEx \cite{ComplEx}
			& 63	& .809	& .757	& .846	& .894
			& 531	& .948	& .945	& .949	& .953  \\
			ConvE \cite{ConvE}
			& 64	& .745	& .670	& .801	& .873
			& 504	& .942	& .935	& .947	& .955  \\
			RotatE \cite{RotatE}
			& 40	& .797	& .746	& .830	& .884
			& 309	& .949	& .944	& .952	& .959  \\
			\hline
			LineaRE
			& 36	& \textbf{.843}	& \textbf{.805}	& \textbf{.867}	& \textbf{.906}
			& 170	& \textbf{.952}	& \textbf{.947}	& \textbf{.955}	& \textbf{.961}  \\
			\hline
		\end{tabular}
	\end{center}
\end{table*}
LineaRE significantly outperforms all those previous state-of-the-art models on almost all the metrics except that TransX performs slightly better than LineaRE on the metric MR on FB15k. Table \ref{FB15k-237WN18RR} summarizes the results on FB15k-237 and WN18RR.
\begin{table*}[t]
	\caption{
		Link prediction results on FB15k-237 and WN18RR.
	}
	\label{FB15k-237WN18RR}
	\begin{center}
		\begin{tabular}{|c||c|c|c|c|c||c|c|c|c|c|}
			\hline
			\multirow{2}{*}{}
			& \multicolumn{5}{c||}{\textbf{FB15k-237}}
			& \multicolumn{5}{c|}{\textbf{WN18RR}}   \\
			\cline{2-6} \cline{7-11}
			& \textbf{\textit{MR}}  & \textbf{\textit{MRR}}	 & \textbf{\textit{Hits@1}}  & \textbf{\textit{Hits@3}}  & \textbf{\textit{Hits@10}}
			& \textbf{\textit{MR}}  & \textbf{\textit{MRR}}  & \textbf{\textit{Hits@1}}  & \textbf{\textit{Hits@3}}  & \textbf{\textit{Hits@10}} \\
			\hline
			TransE \cite{TransE}
			& 172	& .334	& .238	& .371	& .523
			& 1730	& .242	& .042	& .406	& .541  \\
			TransH \cite{TransH}
			& 168	& .339	& .243	& .375	& .531
			& 4345	& .233	& .044	& .395	& .524  \\
			TransD \cite{TransD}
			& 172	& .330	& .235	& .365	& .518
			& 2701	& .247	& .062	& .401	& .537  \\
			DistMult \cite{DistMult}
			& 301	& .311	& .225	& .341	& .485
			& 4675	& .439	& .407	& .450	& .502  \\
			ComplEx \cite{ComplEx}
			& 376	& .313	& .227	& .342	& .486
			& 4824	& .466	& .438	& .479	& .526  \\
			ConvE \cite{ConvE}
			& 246	& .316	& .239	& .350	& .491
			& 5277	& .46	& .39	& .43	& .48  \\
			RotatE \cite{RotatE}
			& 177	& .338	& .241	& .375	& .533
			& 3340	& .476	& .428	& .492	& .571  \\
			\hline
			LineaRE
			& \textbf{155}	& \textbf{.357}	& \textbf{.264}	& \textbf{.391}	& \textbf{.545}
			& \textbf{1644}	& \textbf{.495}	& \textbf{.453}	& \textbf{.509}	& \textbf{.578}  \\
			\hline
		\end{tabular}
	\end{center}
\end{table*}
We can find that no previous model performs better than our LineaRE on any metric.

In general, if a model can cover the main connectivity patterns and mapping properties in a dataset, it will perform well on this dataset. For example, ComplEx and our LineaRE cover inverse pattern and complex mapping properties, they both perform well on FB15k and WN18.
\textit{But there is an exception}, DistMult achieves good performance on FB15k although it cannot model the antisymmetry and inversion patterns. The reason given by \cite{RotatE} is that for most of the relations in FB15K, the types of head entities and tail entities are different. For ($h$, $r$, $t$), the triplet ($t$, $r$, $h$) is usually impossible to be valid since the entity type of $t$ does not match the head entity type of $h$.

\textit{Another exception} is that RotatE performs better than ComplEx on WN18, in which the composition pattern is negligible, while the complex mapping properties are important. The reason is that triplets involving the complex relation that $hpt_r$ (or $tph_r$)
\footnote{
	For relation $r$, $tph_r$ denotes the average number of tails per head, and $hpt_r$ denotes the average number of head per tail.
} 
is greater than $10$ account for $0.89\%$ and $65.1\%$ in WN18 and FB15k, respectively. Thus, the advantage of ComplEx against RotatE in dealing with complex relations is relatively small on WN18. We can find that ComplEx achieves better performance than RotatE on FB15k thanks to the capability of dealing with complex mapping properties.

\paragraph*{Results of TransX on WN18RR}
TransX appears extremely poor Hit@1 on WN18RR. The reason is that TransX makes the translation vector be $\bm{0}$ for symmetry relations, and $f_{r}(\bm{h},\bm{h}) = \|\bm{h_{\bot}} + \bm{0} - \bm{h_{\bot}}\|_{1/2} = 0$, then the entity $h$ will rank at $1$ when predicting ($h$, $r$, $?$), i.e., Hit@1 $=0$ for almost all symmetry relations. If we remove the candidate entity $h$ when predicting ($h$, $r$, $?$) and remove the candidate entity $t$ when predicting ($?$, $r$, $t$), then TransE, TransH and TransD have Hit@1 $=0.313, 0.316, 0.324$, respectively.

\subsection{Ablation Analysis}
To evaluate the importance of connectivity patterns and complex mapping properties, in this section, we analyze the performance of these models with respect to the connectivity patterns and mapping properties in detail (Refer to Table \ref{ModelingAbility}, which summarizes the modeling capabilities of these models).

\paragraph{Symmetry (RotatE and TransE)}
Among these methods, the difference between RotatE and TransE is only that the former can model symmetric relations and the latter cannot. The performance of RotatE is significantly better than that of TransE, because there are many symmetric relations in all datasets except FB15k-237, especially in WN18RR.

\paragraph{Antisymmetry and Inversion (ComplEx and DistMult)}
Complex embeddings enable ComplEx to model two more connectivity patterns (antisymmetry and inversion) than DistMult. The former performs better than the latter on all datasets, especially on WN18, in which the main relation patterns are antisymmetry and inversion.

\paragraph{Composition (LineaRE and ComplEx)}
Complex can model not only all the connectivity patterns except composition but also the complex mapping properties, which makes it achieve very good performance on all datasets other than on FB15k-237, in which the main connectivity pattern is composition. DistMult, which cannot model composition patterns, also performs poorly on FB15k-237. The difference between our LineaRE and ComplEx is that LineaRE is capable of modeling the composition pattern. Thus, our model performs better, especially on FB15k-237.

\paragraph{Complex mapping properties (LineaRE and RotatE)}
RotatE has a powerful modeling capability for all the above connectivity patterns, which makes it perform well on these datasets. However, RotatE is still inferior to our LineaRE because our LineaRE has the same capability of modeling all the connectivity patterns as RotatE does, and further, LineaRE can deal with complex mapping properties that RotatE cannot handle. On the relatively more complex dataset FB15k, our LineaRE archives a more prominent advantage. Recall that triplets involving a complex relation, $hpt_r$ (or $tph_r$) of which is greater than $10$, account for $0.89\%$ and $65.1\%$ in datasets WN18 and FB15k, respectively.

\subsection{Experimental Results on FB15k by Relation Category}
Following \cite{TransE,TransH,TransD,DistMult,RotatE}, we also did some further investigation on the performance of LineaRE on different relation categories
\footnote{
	Following \cite{TransH}, for each relation $r$, we compute $tph_r$ and $hpt_r$. If $hpt_r < 1.5$ and $tph_r < 1.5$, $r$ is treated as one-to-one; if $hpt_r \geq 1.5$ and $tph_r \geq 1.5$, $r$ is treated as a many-to-many; if $hpt_r < 1.5$ and $tph_r \geq 1.5$, $r$ is treated as one-to-many. If $hpt_r \geq 1.5$ and $tph_r < 1.5$, $r$ is treated as many-to-one.
}.
Table \ref{RelationCategery} summarizes the detailed results by relation category
on FB15k, which shows that our LineaRE achieves the best performance on complex relations.
\begin{table*}[t]
	\caption{
		The detailed link prediction results by relation category on FB15k.
	}
	\label{RelationCategery}
	\centering
	\begin{tabular}{|c||c|c|c|c||c|c|c|c|}
		\hline
		\textbf{Rel. Cat}
		& \textbf{1-to-1}	& \textbf{1-to-N}	& \textbf{N-to-1}	& \textbf{N-to-N}	& \textbf{1-to-1}	& \textbf{1-to-N}	& \textbf{N-to-1}	& \textbf{N-to-N}  \\
		\hline
		\textbf{Task}
		& \multicolumn{4}{c||}{\textbf{Predicting Head (Hits@10)}}
		& \multicolumn{4}{c||}{\textbf{Predicting Tail (Hits@10)}}   \\
		\hline
		TransE \cite{TransE}	& .916	& \textbf{.975}	& .626	& .881	& .899	& .704	& \textbf{.968}	& .909  \\
		TransH \cite{TransH}	& .892	& .969	& .628	& .895	& .866	& .716	& .965	& .921  \\
		TransD \cite{TransH}	& .903	& .971	& .639	& .895	& .880	& .741	& .964	& .921  \\
		DistMult \cite{DistMult}& .925	& .965	& .657	& .890	& .923	& .821	& .949	& .917  \\
		ComplEx	\cite{ComplEx} & .928	& .962	& .673	& .897	& \textbf{.934}	& .831	& .950	& .923  \\
		RotatE \cite{RotatE}	& .922	& .967	& .602	& .893	& .923	& .713	& .961	& .922  \\
		\hline
		LineaRE			& \textbf{.930}	& .973	& \textbf{.723}	& \textbf{.906}	& .923	& \textbf{.854}	& .965	& \textbf{.933}  \\
		\hline
		\hline
		\textbf{Task}
		& \multicolumn{4}{c||}{\textbf{Predicting Head (MRR)}}
		& \multicolumn{4}{c||}{\textbf{Predicting Tail (MRR)}}   \\
		\hline
		TransE \cite{TransE}	& .740	& .929	& .498	& .730	& .739	& .594	& .906	& .752  \\
		TransH \cite{TransH}	& .664	& .905	& .495	& .750	& .658	& .586	& .891	& .774  \\
		TransD \cite{TransH}	& .669	& .910	& .499	& .752	& .664	& .601	& .894	& .775  \\
		DistMult \cite{DistMult}& .813	& .922	& .526	& .793	& .805	& .683	& .886	& .817  \\
		ComplEx \cite{ComplEx}	& .820	& .928	& .557	& .819	& .815	& .717	& .890	& .838  \\
		RotatE \cite{RotatE}	& \textbf{.878}	& .934	& .465	& .787	& \textbf{.872}	& .611	& .909	& .832  \\
		\hline
		LineaRE	& .865	& \textbf{.943}	& \textbf{.623}	& \textbf{.845}	& .860	& \textbf{.772}	& \textbf{.920}	& \textbf{.867}  \\
		\hline
	\end{tabular}
\end{table*}
\begin{figure*}[t]
	\centering
	\subfigure[\_similar\_to]{
		\includegraphics[width=0.23\linewidth]{./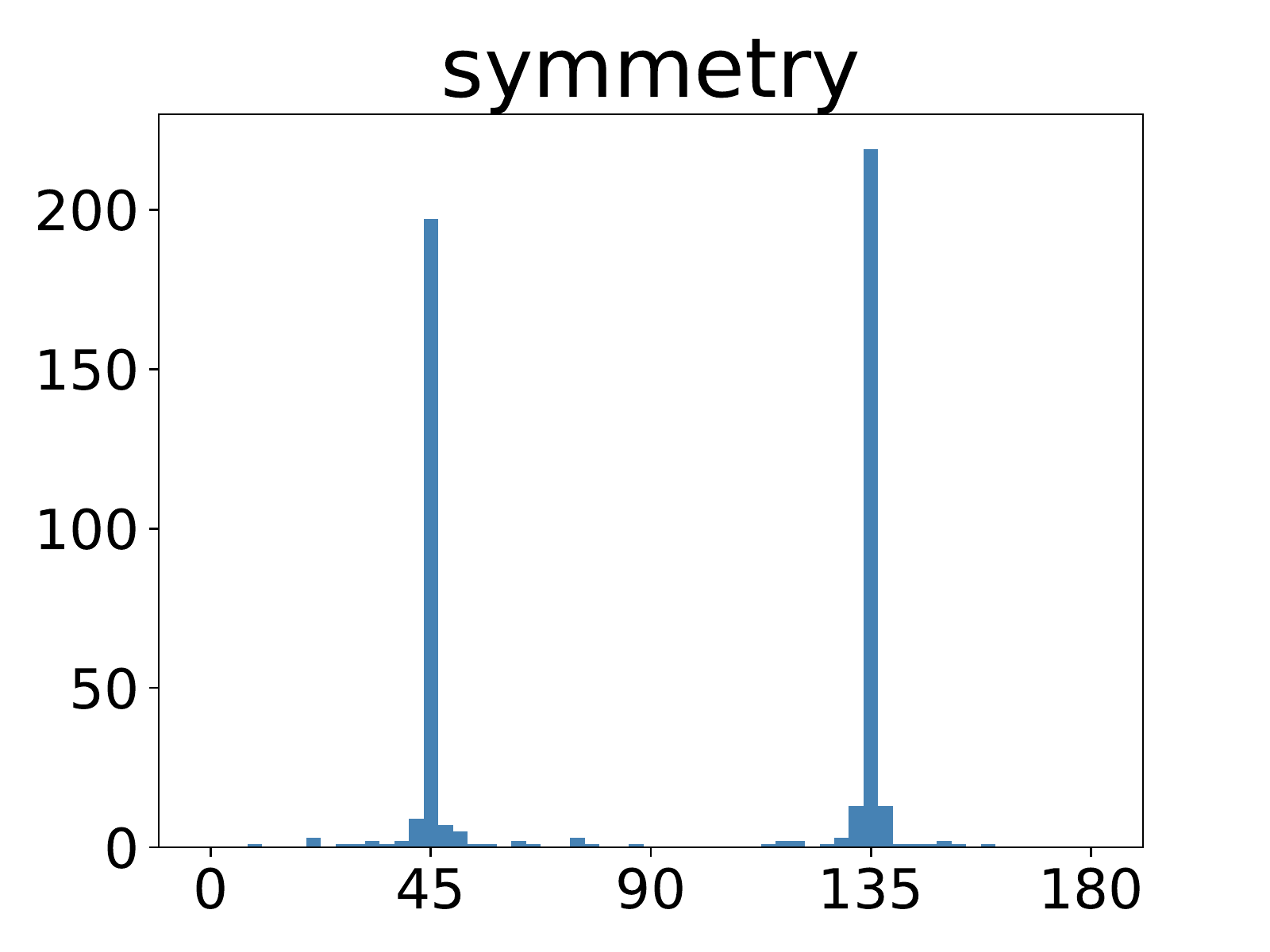}
		\label{SymmetryEmbedding}
	}
	\subfigure[\_hypernym \& \_hyponym$^{-1}$]{
		\includegraphics[width=0.23\linewidth]{./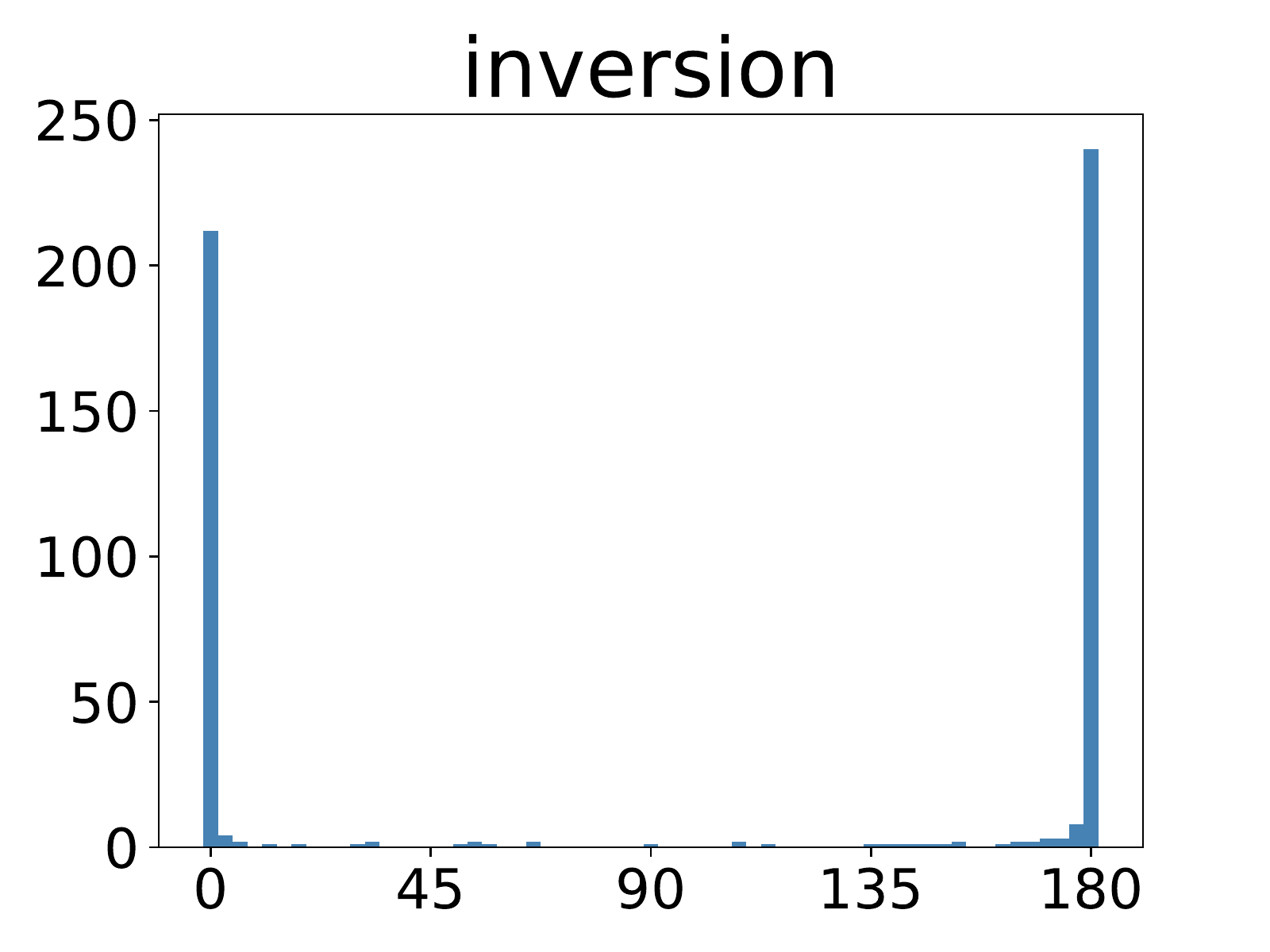}
		\label{InversionEmbedding}
	}
	\subfigure[winner$\odot$for$_1$ \& for$_2$]{
		\includegraphics[width=0.23\linewidth]{./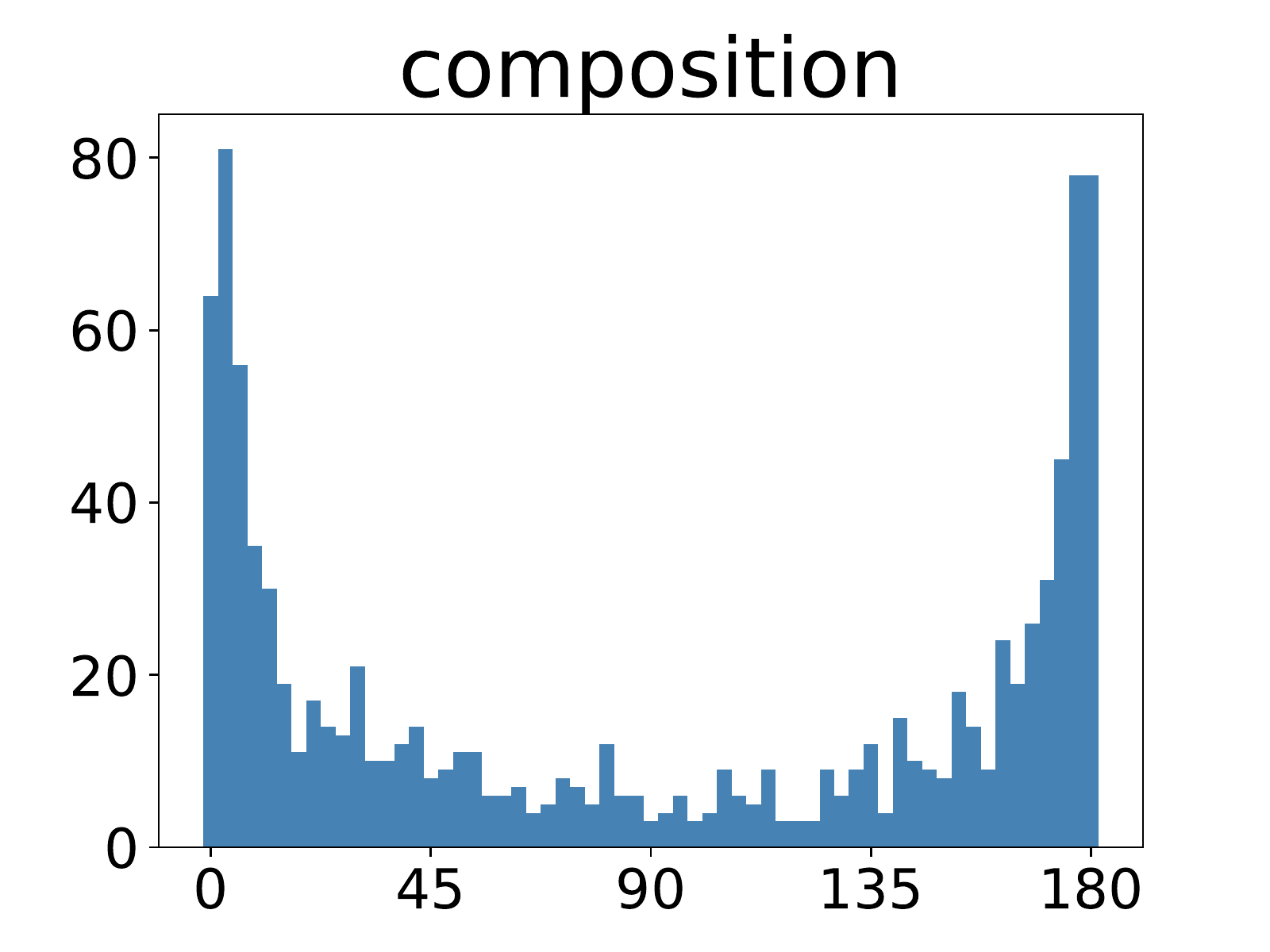}
		\label{CompositionEmbedding}
	}
	\subfigure[\_hyponym]{
		\includegraphics[width=0.23\linewidth]{./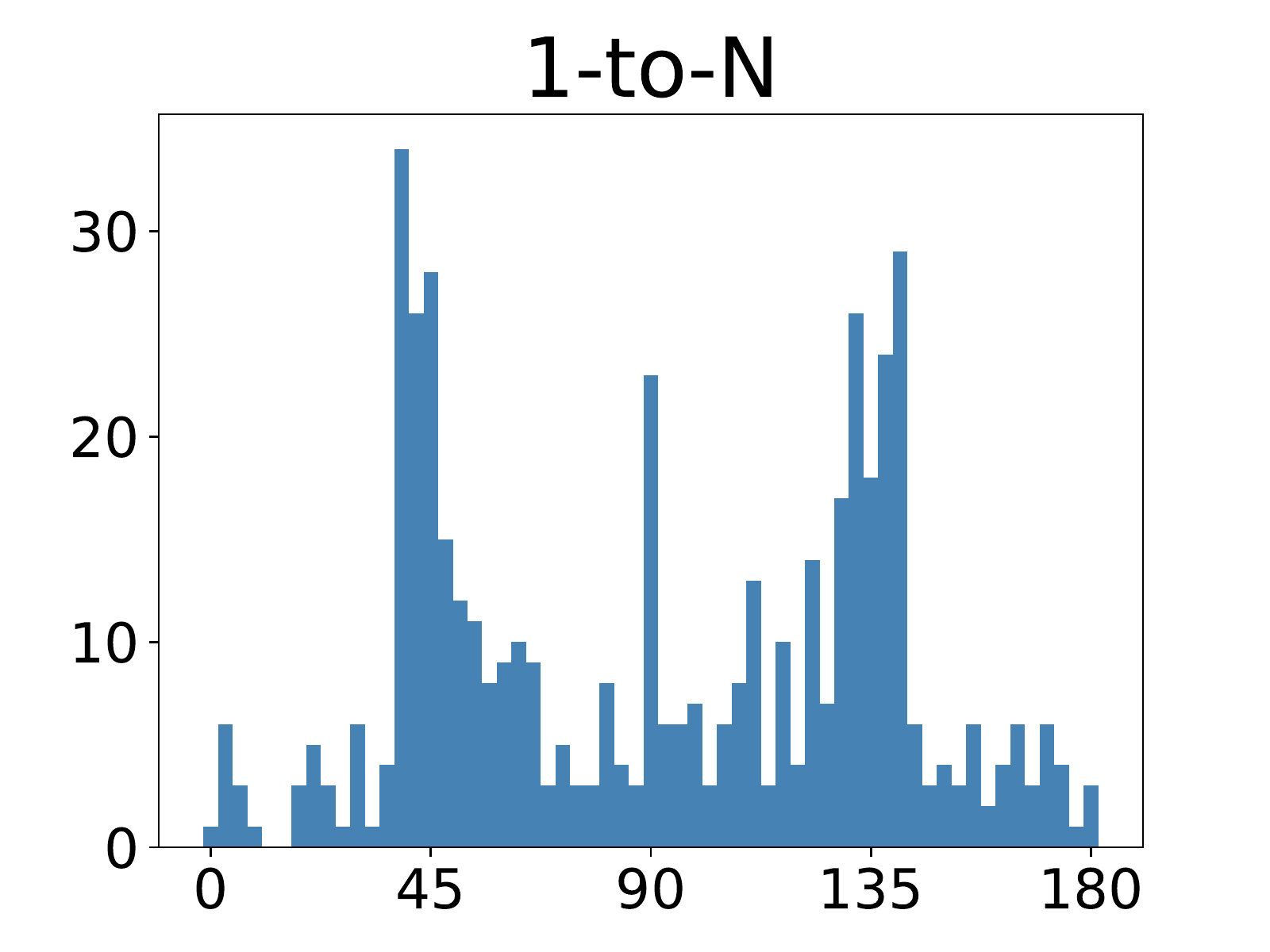}
		\label{1-to-NEmbedding}
	}
	\caption{
		Histograms of angles corresponding to some relation embeddings. (a) Angles between each straight line of $\_similar\_to$ and the $h$ axis; (b) Angles between the straight lines of  $\_hypernym$ and those of $\_hyponym^{-1}$ along the same dimension; (c) Angles between the straight lines of $for_2$ and the composition of $for_1$ and $winner$; (d) Angles between the straight lines of $\_hyponym$ and the $h$ axis; $\odot$ denotes the composition operation.
	}
	\label{RelationEmbedding}
\end{figure*}
DistMult, ComplEx and LineaRE, which are capable of modeling complex mapping properties, perform well on 1-to-N (\textit{predicting tail}), N-to-1 (\textit{predicting head}), and N-to-N relations, while RotatE and TransE both perform worse.

The performance of TransH and TransD are worse than expected. We believe that these two models reduce their ability in other aspects such as modeling inversion pattern when enhancing their ability to deal with complex relations.

\subsection{Investigation of Entity and Relation Embeddings}
To verify our theoretical analysis of the modeling capabilities of LineaRE in Section \ref{Capabilities}, we investigate some relevant entity and relation embeddings (500 dimensions on WN18 and 1000 dimensions on FB15k-237).

\paragraph{Symmetry Pattern}
Fig. \ref{SymmetryEmbedding} shows the angles between the straight lines of the relation $\_similar\_to$ in WN18 and the $h$ axis. Almost all of the 500 angles are equal to or close to 45° or 135°, i.e. these straight lines are symmetrical with respect to $h=t$. This provides the evidence for our analysis of LineaRE in modeling the symmetry pattern in Section \ref{Capabilities} (Fig. \ref{Symmetry}).
As shown in Fig. \ref{RelationLines}, we select two representative dimensions of $\_similar\_to$ and plot the corresponding straight lines. In Fig. \ref{SymmetryLine1}, the angle between the first straight line and $h$ axis is 135°, and the angle between the second straight line and $h$ axis is 45°. Moreover, the entity pairs ($h$, $t$) are closely distributed on or near the line in our LineaRE, while in TransE, the entity pairs are scattered. This indicates that our LineaRE has better modeling ability than TransE.
\begin{figure*}[t]
	\centering
	\subfigure[$\_similar\_to$ in LineaRE]{
		\centering
		\includegraphics[width=0.23\linewidth]{./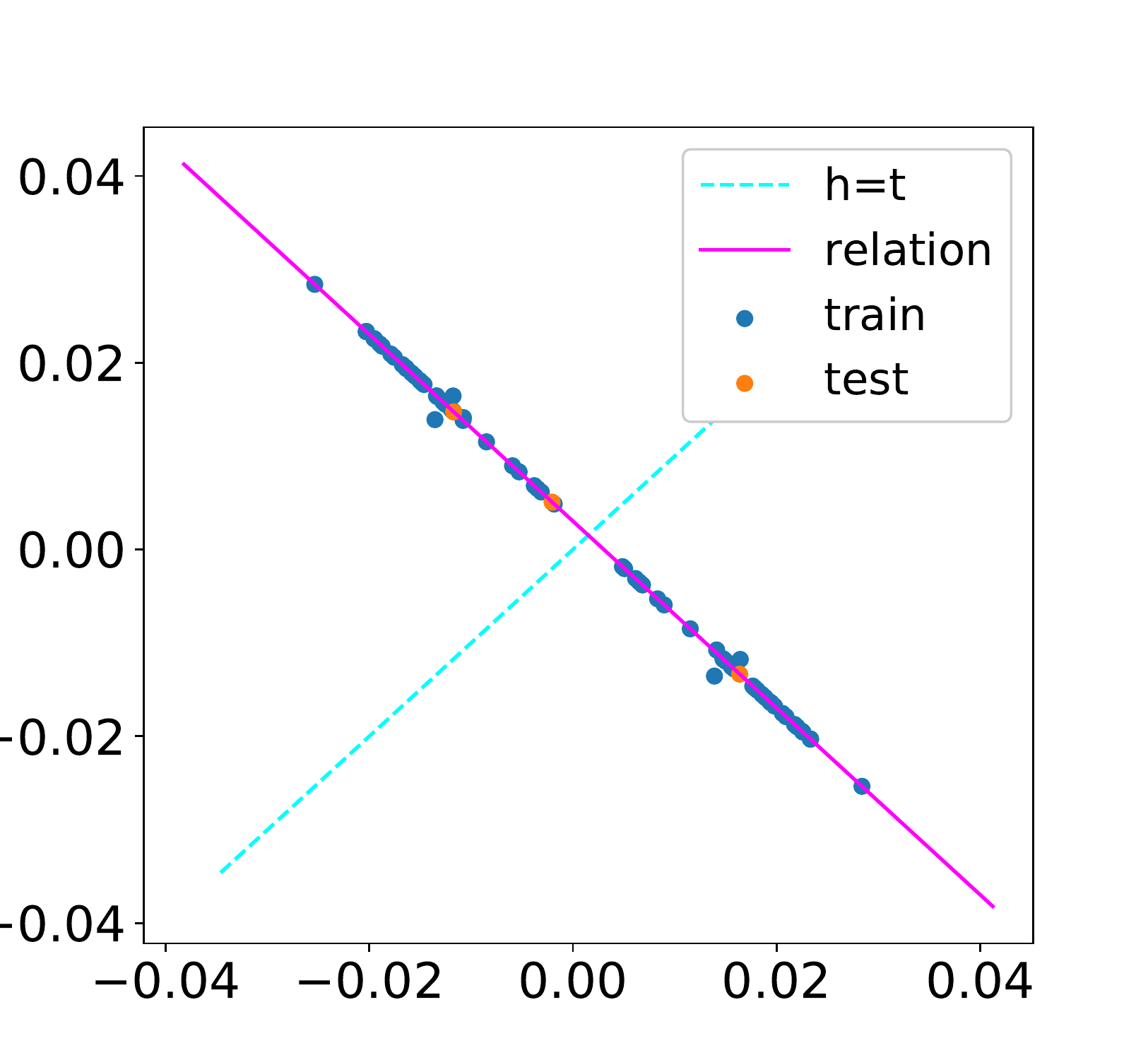}
		\includegraphics[width=0.23\linewidth]{./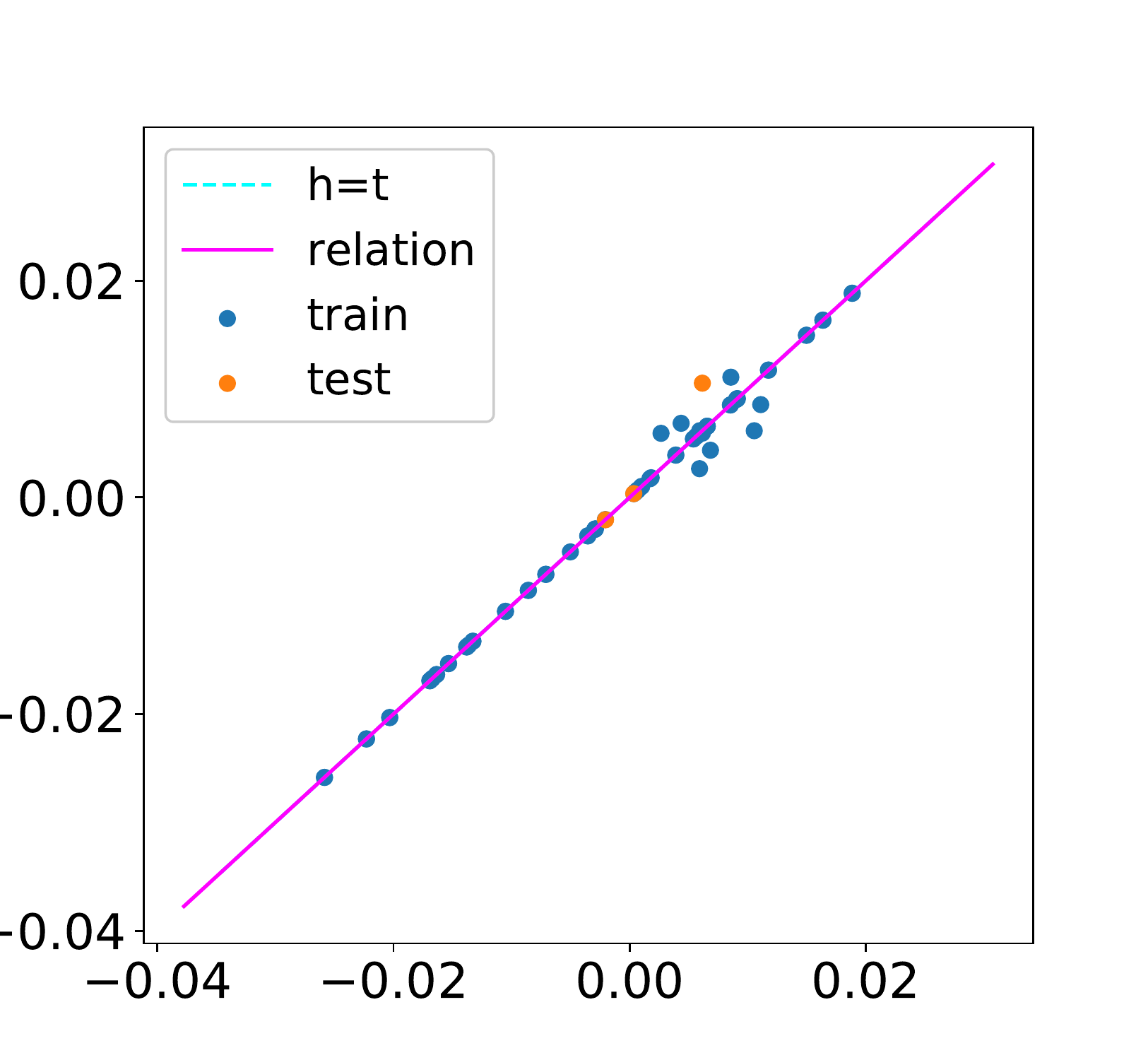}
		\label{SymmetryLine1}
	}
	\subfigure[$\_similar\_to$ in TransE]{
		\includegraphics[width=0.23\linewidth]{./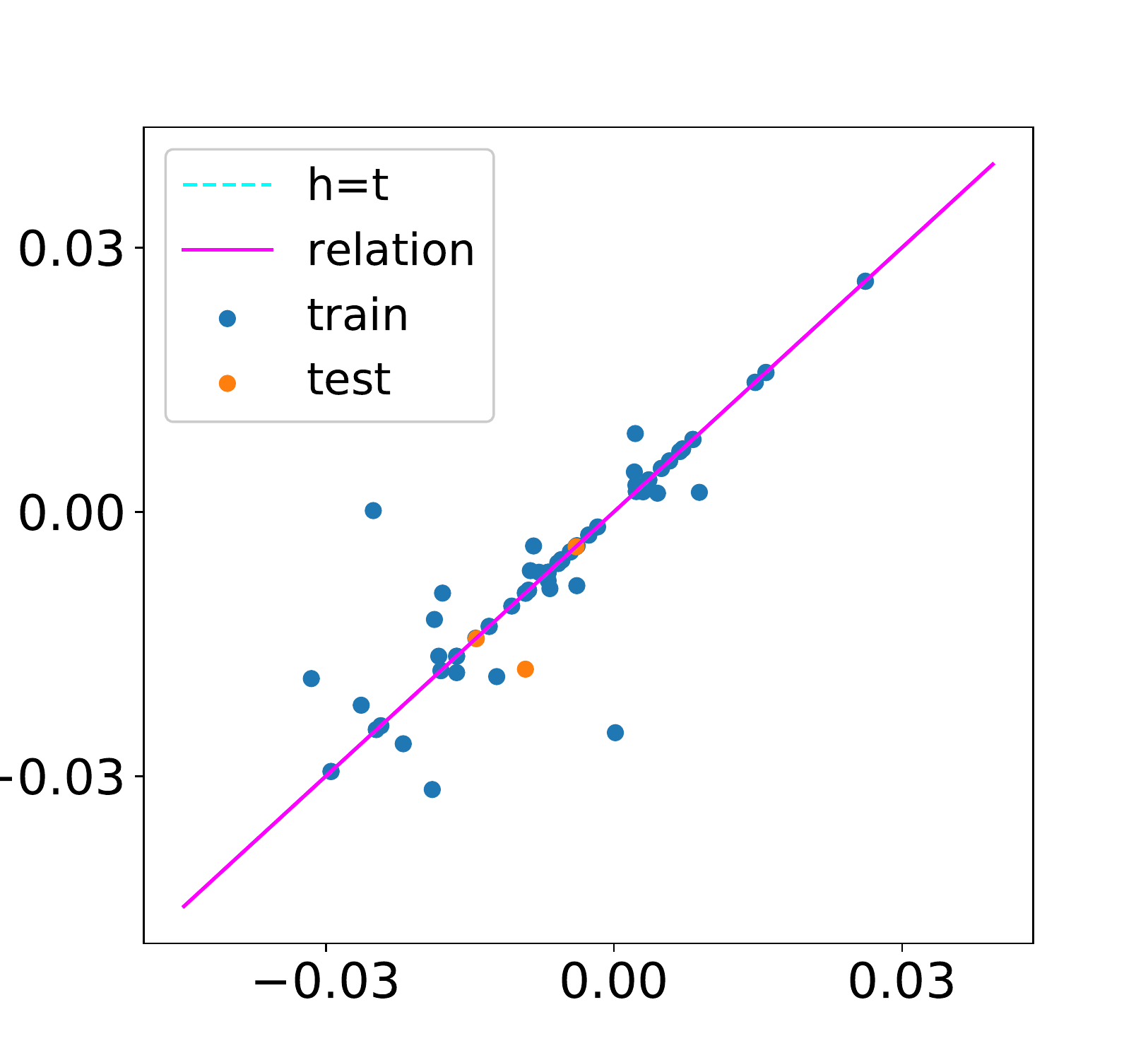}
		\includegraphics[width=0.23\linewidth]{./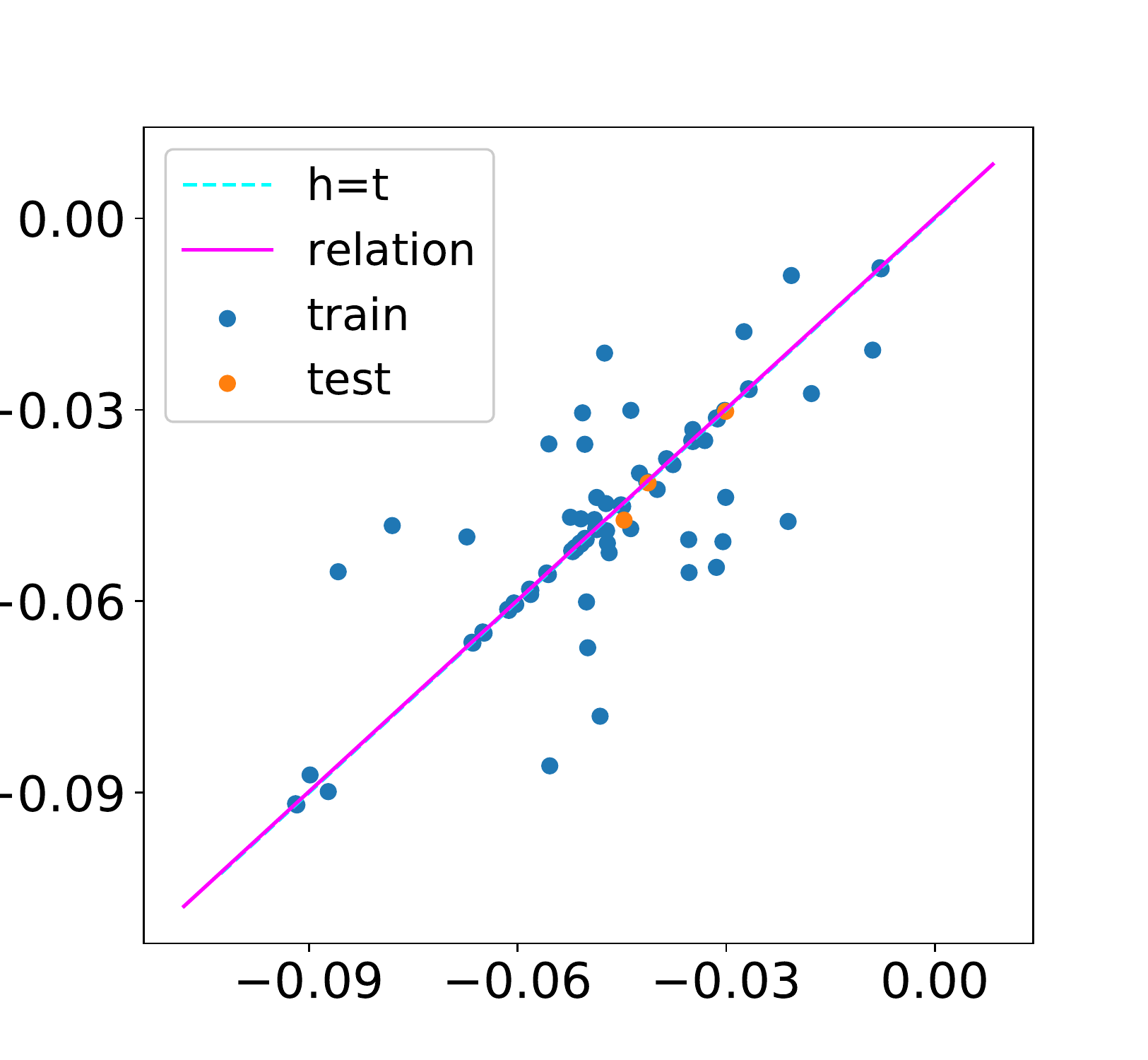}
		\label{SymmetryLine2}
	}
	\caption{
		Visualization of the straight lines of $\_similar\_to$ and the entity pairs having the relation of $\_similar\_to$.
	}
	\label{RelationLines}
\end{figure*}

\paragraph{Antisymmetry and Inversion Pattern}
Relation $\_hypernym$ and $\_hyponym$ in WN18 are a pair of inverse relations. We first inverse $\_hyponym$ (denoted as $\_hyponym^{-1}$), and then calculate the angles between the straight lines of the two relations along the same dimensions. Fig. \ref{InversionEmbedding} shows that most angles are equal to or close to 0° or 180°, i.e., the lines of $\_hypernym$ and $\_hyponym^{-1}$ are parallel to each other. To further prove that $\_hypernym$ and $\_hyponym$ are symmetrical with respect to $h=t$, we select two dimensions of them and plot the corresponding straight lines, as shown in Fig. \ref{hypernym}. In addition, we can see from Fig. \ref{has_part} that $\_has\_part$ and $\_part\_of$ in WN18 are also symmetrical with respect to $h=t$. These figures are completely consistent with Fig. \ref{Inversion} in Section \ref{Capabilities}, which indicates that LineaRE can model the inverse pattern.
\begin{figure}[t]
	\centering
	\subfigure[$\_hypernym \& \_hyponym$]{
		\includegraphics[width=0.46\linewidth]{./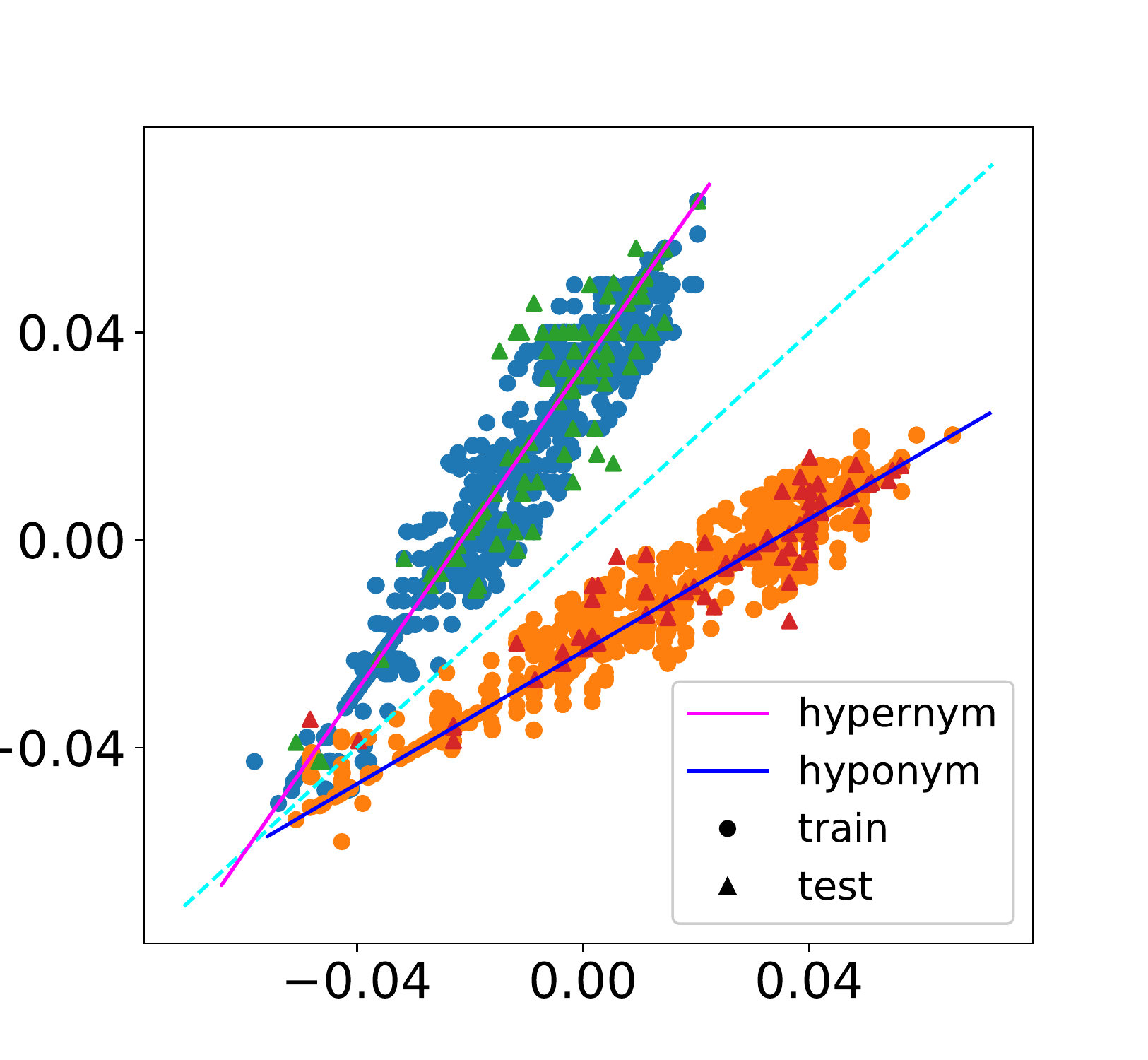}
		\includegraphics[width=0.46\linewidth]{./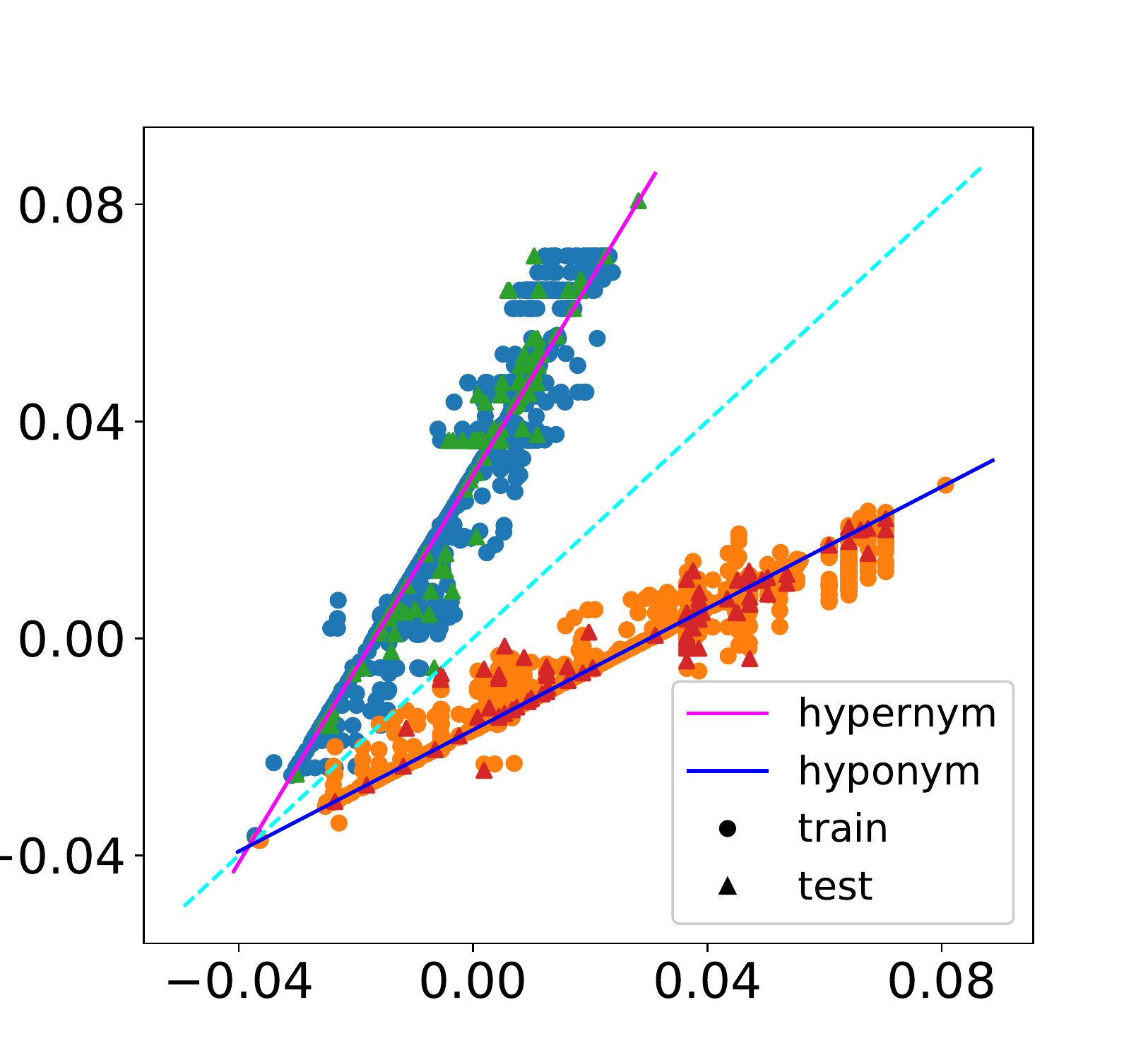}
		\label{hypernym}
	}
	\subfigure[$\_has\_part \& \_part\_of$]{
		\includegraphics[width=0.46\linewidth]{./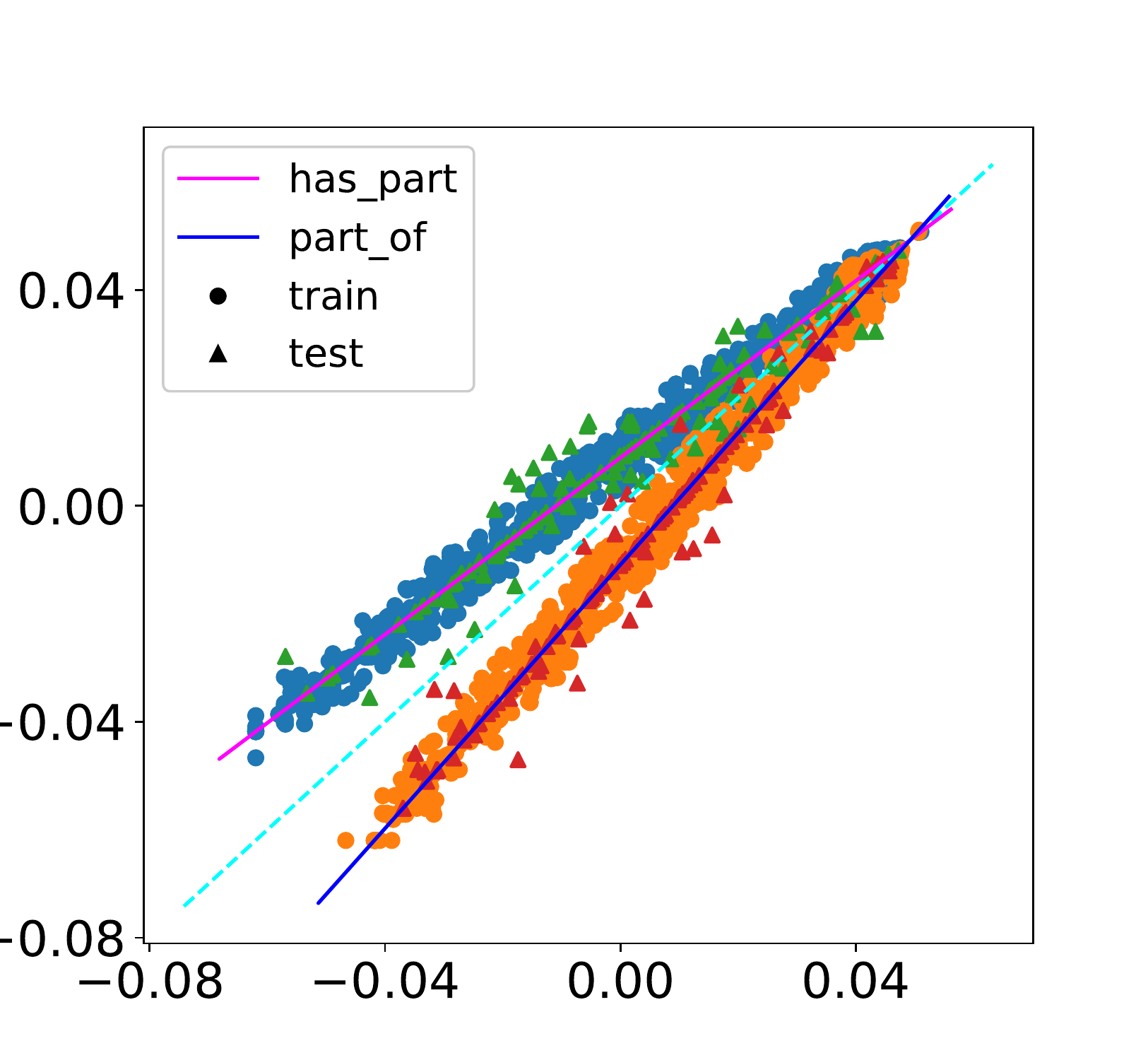}
		\includegraphics[width=0.46\linewidth]{./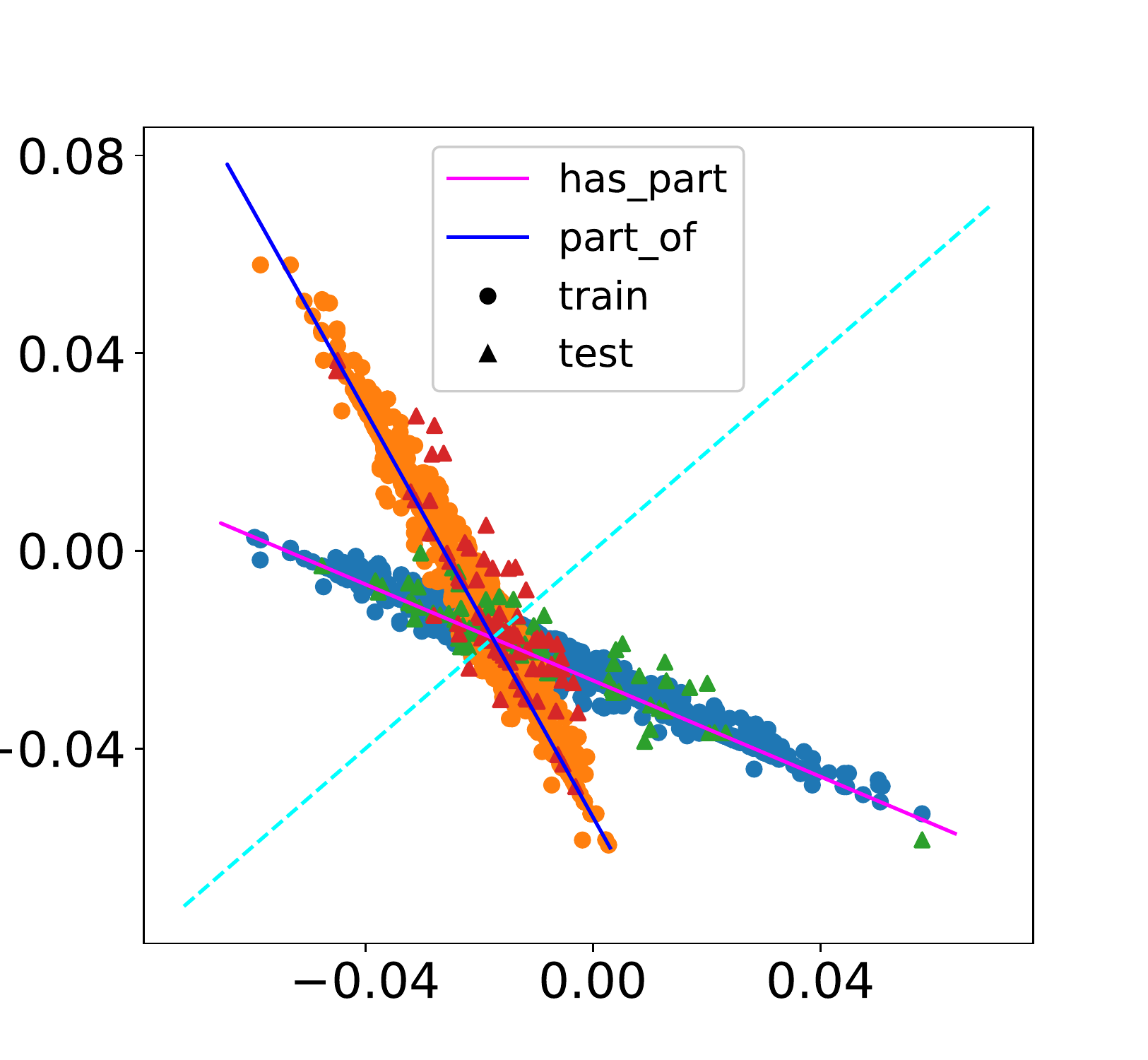}
		\label{has_part}
	}
	\caption{
		Visualization of the straight lines of inverse relations.
	}
	\label{InversionLine}
\end{figure}

\paragraph{Composition Pattern}
In FB15k-237, $for_2$ is a composition of $for_1$ and $winner$\footnote{
	Here, $for_1$ represents relation /award/award\_nominee/award\_nominations./awa-
	rd/award\_nomination/nominated\_for, $winner$ represents relation/award/ award\_category/winners./award/award\_honor/award\_winner, and $for_2$ represents /award/award\_category/nominees./award/award\_nomination/nominated\_ for.
}. We compute the angles between the composite straight lines and the lines of $for_2$ along the same dimensions. Fig. \ref{CompositionEmbedding} shows that the composition of $for_1$ and $winner$ is very similar to $for_2$.

\paragraph{Complex Mapping Properties}
For the 1-to-N relation $\_hyponym$, $hpt_r=1.02$, and $tph_r=3.66$, Fig. \ref{1-to-NEmbedding} shows that there are more steep-slope straight lines than the gentle-slope ones in $\_hyponym$. The relation $\_member\_of\_domain\_topic$ (denoted as $topic$) in WN18 is a 1-to-N relation, and ($06090869$, $topic$) has $69$ tails in the training set, and $3$ tails in the test set. Fig. \ref{TopicLine} shows two straight lines of $topic$ and the distribution of these ($06090869$, $t$) points. The first straight line is very steep, which allows the 72 tail entities corresponding to the head entity $06090869$ to be different from each other. And the second straight line is relatively flat, which makes the 72 tail entities close to each other. Thus, these tail entities are different from each other in dimensions like Fig. \ref{topic_1}, and the same in dimensions like Fig. \ref{topic_2}. This is completely consistent with our analysis in Section \ref{Capabilities}, which proves that LineaRE can model complex mapping properties.
\begin{figure}[t]
	\centering
	\subfigure[steep slope]{
		\includegraphics[width=0.46\linewidth]{./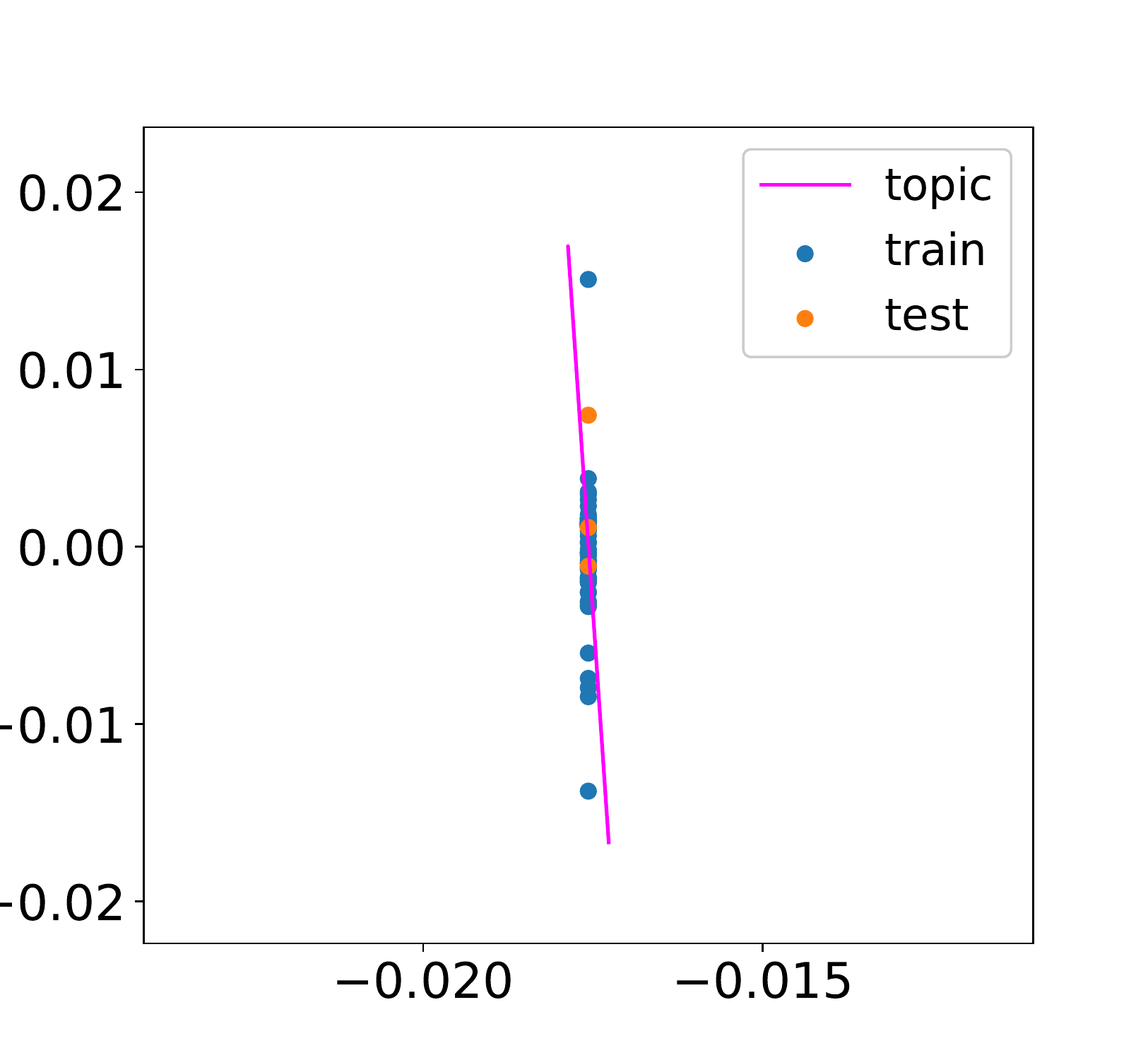}
		\label{topic_1}
	}
	\subfigure[gentle slope]{
		\includegraphics[width=0.46\linewidth]{./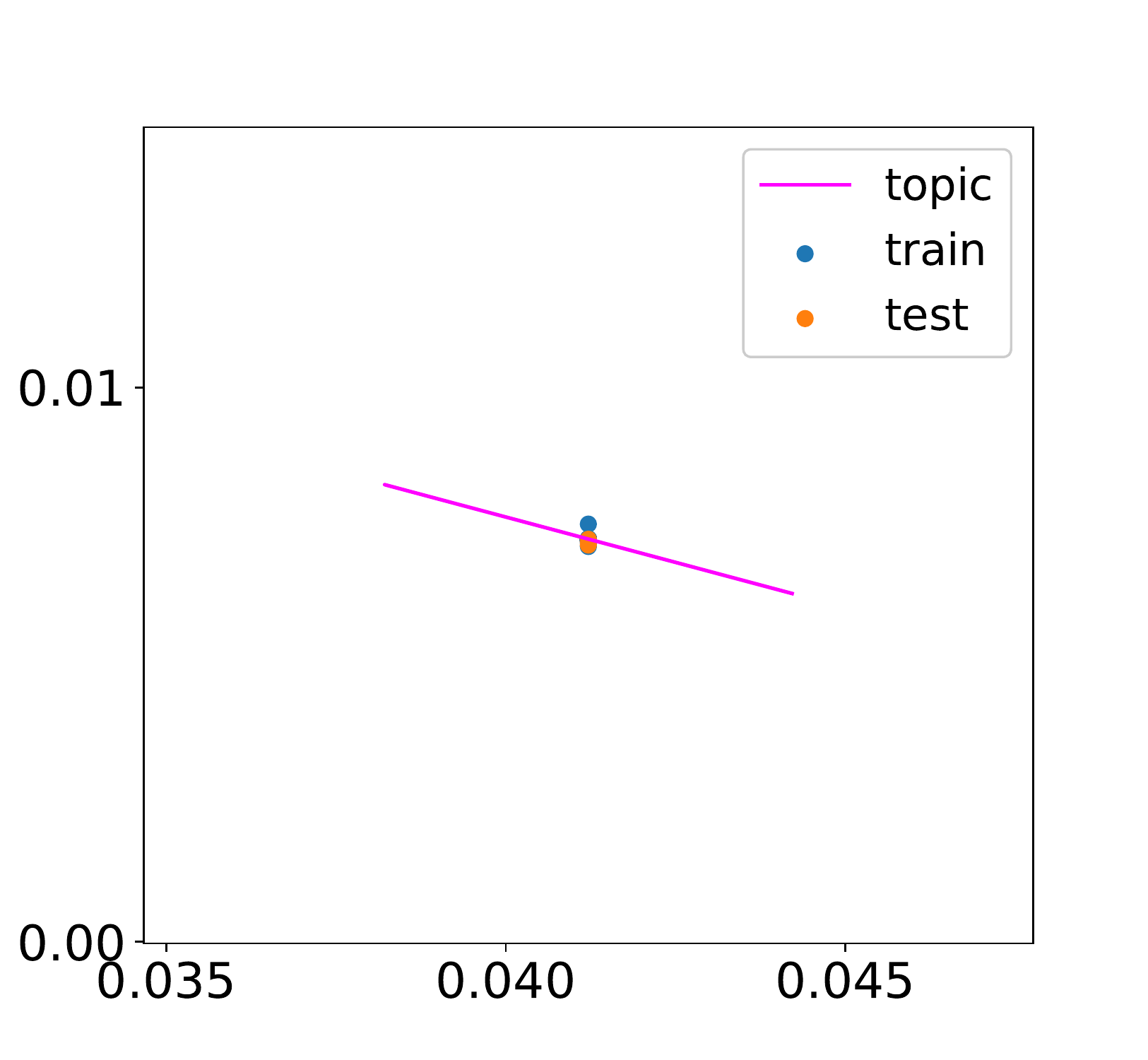}
		\label{topic_2}
	}
	\caption{
		Visualization of the straight lines of relation $topic$ and 72 entity pairs ($06090869$, $t_i$).
	}
	\label{TopicLine}
\end{figure}

\subsection{Inferring Relation Patterns on Dataset Countries}
Following \cite{Bouchard2015On,nickel2016holographic,RotatE}, we also evaluate our proposed LineaRE on the Countries dataset \cite{Bouchard2015On, nickel2016holographic}, which is meticulously designed for testing the capabilities of knowledge graph embedding methods when modeling composition pattern. It contains 2 relations ($neighborOf$, $locatedIn$) and 272 entities (244 countries, 5 regions, and 23 subregions). Reference \cite{nickel2016holographic} sets three tasks with increasing difficulty:
\begin{enumerate}
	\item Task S1 requires inferring a simple composition pattern:
	\begin{center}
		$locatedIn(c,s) \wedge locatedIn(s,r) \Rightarrow locatedIn(c,r)$
	\end{center}
	where $c$ denotes a country, $s$ denotes a subregion, $r$ denotes a region.
	\item Task S2 is more difficult than task S1, the correct triplets can be predicted from:
	\begin{center}
		$neighborOf(c_1,c_2) \wedge locatedIn(c_2,r) \Rightarrow locatedIn(c_1,r)$
	\end{center}
	\item Task S3 is the most difficult task, the correct triplets can be predicted from:
	\begin{center}
		$neighborOf(c_1,c_2) \wedge locatedIn(c_2,s) \wedge locatedIn(s,r) \Rightarrow locatedIn(c_1,r)$
	\end{center}
\end{enumerate}
In Table \ref{Countries}, we report the results with respect to the AUC-PR metric (area under the precision-recall curve). Due to the ability of modeling both the composition pattern and complex mapping properties, LineaRE achieves the best performance, especially on the most difficult task S3.

\begin{table}[t]
	\caption{Results on the Countries dataset}
	\begin{center}
		\begin{tabular}{|c|c|c|c|}
			\hline
			\textbf{}&\multicolumn{3}{|c|}{\textbf{Countries(AUC-PR)}} \\
			\cline{2-4} 
			\textbf{} & \textbf{\textit{S1}}& \textbf{\textit{S2}}& \textbf{\textit{S3}} \\
			\hline
			TransE
			& \textbf{1.00 $\pm$ 0.00}	& \textbf{1.00 $\pm$ 0.00}	& 0.96 $\pm$ 0.00  \\
			ComplEx
			& \textbf{1.00 $\pm$ 0.00}	& 0.98 $\pm$ 0.00	& 0.88 $\pm$ 0.01  \\
			RotatE
			& \textbf{1.00 $\pm$ 0.00}	& \textbf{1.00 $\pm$ 0.00}	& 0.95 $\pm$ 0.00  \\
			\hline
			LineaRE
			& \textbf{1.00 $\pm$ 0.00}	& \textbf{1.00 $\pm$ 0.00}	& \textbf{0.99 $\pm$ 0.01}  \\
			\hline
			\multicolumn{4}{l}{Results of TransE, ComplEx and RotatE are taken from \cite{RotatE}.}
		\end{tabular}
		\label{Countries}
	\end{center}
\end{table}

\section{Conclusion}
In this paper, we proposed a novel knowledge graph embedding method LineaRE for link prediction, which models four connectivity patterns and four mapping properties of relations  in the manner of linear regression. We provided formal mathematical proofs to demonstrate the modeling capabilities of LineaRE. Extensive experimental results on the task of link prediction showed that the proposed LineaRE model significantly outperforms existing state-of-the-art models on several widely used benchmark datasets. A deep investigation into the entity and relation embeddings further verifies our theoretical analysis of its modeling capabilities. Moreover, results on the Countries dataset showed that our LineaRE has a powerful reasoning ability.

\section*{Acknowledgment}
This work was supported by the National Natural Science Foundation of China under grants 62076130 and 91846104, and the National Key Research and Development Program of China under grant 2018AAA0102002.

\bibliographystyle{IEEETran}
\bibliography{conference_101719}

\end{document}